\newtheorem{theorem}{Theorem}
\newtheorem{property}{Property}
\newtheorem{lemma}{Lemma}
\title[AAMAS-2023 Formatting Instructions]{Offline Multi-Agent Reinforcement Learning \\
with Coupled Value Factorization}
\author{Xiangsen Wang}
\affiliation{
  \institution{Beijing Jiaotong University}
  \city{Beijing}
  \country{China}}
\email{wangxiangsen@bjtu.edu.cn}
\author{Xianyuan Zhan}
\affiliation{
  \institution{Tsinghua University}
  \city{Beijing}
  \country{China}}
\email{zhanxianyuan@air.tsinghua.edu.cn}
\begin{abstract}
Offline reinforcement learning (RL) that learns policies from offline datasets without environment interaction has received considerable attention in recent years. Compared with the rich literature in the single-agent case, offline multi-agent RL is still a relatively underexplored area. Most existing methods directly apply offline RL ingredients in the multi-agent setting without fully leveraging the decomposable problem structure, leading to less satisfactory performance in complex tasks. We present OMAC, a new \underline{o}ffline \underline{m}ulti-\underline{a}gent RL algorithm with \underline{c}oupled value factorization. OMAC adopts a coupled value factorization scheme that decomposes the global value function into local and shared components, and also maintains the credit assignment consistency between the state-value and Q-value functions. Moreover, OMAC performs in-sample learning on the decomposed local state-value functions, which implicitly conducts max-Q operation at the local level while avoiding distributional shift caused by evaluating out-of-distribution actions.
Based on the comprehensive evaluations of the offline multi-agent StarCraft II micro-management tasks, we demonstrate the superior performance of OMAC over the state-of-the-art offline multi-agent RL methods.
\end{abstract}
\keywords{Multi-agent reinforcement learning, Offline reinforcement learning, Multi-agent cooperation}
\newcommand{\BibTeX}{\rm B\kern-.05em{\sc i\kern-.025em b}\kern-.08em\TeX}
\begin{document}

%%% The following commands remove the headers in your paper. For final 
%%% papers, these will be inserted during the pagination process.

\pagestyle{fancy}
\fancyhead{}

%%% The next command prints the information defined in the preamble.

\maketitle 

%%%%%%%%%%%%%%%%%%%%%%%%%%%%%%%%%%%%%%%%%%%%%%%%%%%%%%%%%%%%%%%%%%%%%%%%

\section{Introduction}

Many real-world scenarios belong to multi-agent systems, such as autonomous vehicle coordination \cite{b3}, network routing \cite{b4}, and power grids \cite{b9}. This gives rise to the active research field of multi-agent RL (MARL) for solving sequential decision-making tasks that involve multiple agents. Although MARL has made some impressive progress in solving complex tasks such as games~\cite{vinyals2019grandmaster, dota}, the successes are mostly restricted to scenarios with high-fidelity simulators or allowing unrestricted interaction with the real environment. In most real-world scenarios, reliable simulators are not available and it can be dangerous or costly for online interaction with the real system during policy learning. The recently emerged offline RL methods provide another promising direction by training the RL agent with pre-collected offline dataset without system interaction~\cite{bcq, bear, kumar2020conservative,zhan2021deepthermal}.

Compared with offline single-agent RL, offline MARL is a relatively underexplored area and considerably more complex~\cite{omar, icq}. Directly incorporating offline RL ingredients into existing MARL frameworks still bears great challenges. Under the offline setting, evaluating value function outside data coverage areas can produce falsely optimistic values, causing the issue of \textit{distributional shift}~\cite{bear}, leading to seriously overestimated value estimates and misguiding policy learning. Hence the key to offline RL is to introduce some form of data-related regularization and learn pessimistically. When adding the multi-agent consideration, the joint action space grows exponentially with the number of agents, the difficulty is further exacerbated. As we need to additionally consider regularizing multi-agent policy optimization with respect to the data distribution, which can be very sparse under high-dimensional joint action space, especially when the offline dataset is small. This can potentially lead to over-conservative multi-agent policies due to extremely limited feasible state-action space under data-related regularization.

Consequently, an effective offline MARL algorithm needs to not only fully leverage the underlying decomposable problem structure, but also organically incorporate offline data-related regularization. Ideally, the data-related regularization should be performed at the individual agent level to avoid the negative impact of sparse data distribution at the joint space, and enable producing a more relaxed yet still valid regularization to prevent distributional shift. Under this rationale, a natural choice is to consider the Centralized Training with Decentralized Execution (CTDE) framework~\cite{b5,b6,vdn} and the Individual-Global-Max (IGM)~\cite{qmix, qtran} condition to decompose the global value function as the combination of local value functions. However, existing offline MARL algorithms that naively combine the CTDE framework with local-level offline RL \cite{icq, omar} still suffer from several drawbacks. First, the value decomposition scheme is not specifically designed for the offline setting. Second, they may still suffer from instability issues caused by bootstrapping error accumulation with coupled offline learning of local value function $Q_i$ and policy $\pi_i$~\cite{bear,iql}. The instability of the local value function will further propagate and negatively impact the learning of the global value function. Applying strong data-related regularization can alleviate the bootstrapping error accumulation during offline learning, but at the cost of over-conservative policy learning and potential performance loss, a common dilemma encountered in many offline RL methods~\cite{doge}. 

To tackle above issues, we propose OMAC, a new \underline{o}ffline \underline{m}ulti-\underline{a}gent RL algorithm with \underline{c}oupled value factorization. OMAC organically marries offline RL with a specially designed coupled multi-agent value decomposition strategy. In additional to decomposing global Q-value function $Q_{tot}$ in typical CTDE framework, OMAC also decomposes $V_{tot}$ into local state-value functions $V_i$ and a shared component $V_{share}$. Moreover, OMAC poses an extra coupled credit assignment scheme between state-value and Q-value functions to enforce consistency and a more regularized global-local relationship. Under this factorization strategy, we can learn an upper expectile local state-value function $V_i$ in a completely in-sample manner. It enables separated learning of the local Q-value function $Q_i$ and the policy $\pi_i$, which improves the learning stability of both the local and global Q-value functions.
We benchmark our method using offline datasets of StarCraft Multi-Agent Challenge (SMAC) tasks \cite{smac}. The results show that OMAC achieves state-of-the-art (SOTA) performance compared with the competing baseline methods. We also conduct further analyses to demonstrate the effectiveness of our design, as well as the sample efficiency of OMAC.

%%%%%%%%%%%%%%%%%%%%%%%%%%%%%%%%%%%%%%%%%%%%%%%%%%%%%%%%%%%%%%%%%%%%%%%%

\section{Related Work}
{\bfseries Offline reinforcement learning.} The main challenge in offline RL is to prevent distributional shift and exploitation error accumulation when evaluating the value function on out-of-distribution (OOD) samples. Existing offline RL methods adopt several approaches to regularize policy learning from deviating too much from offline datasets. Policy constraint methods \cite{bcq, bear, brac, awac, doge} add explicit or implicit behavioral constraints to restrain the policy to stay inside the distribution or support of data. Value regularization methods \cite{cql, fisher-brc, cpq} regularize the value function to assign low values on OOD actions. Uncertainty-based and model-based methods~\cite{wu2021uncertainty, bai2021pessimistic,yu2020mopo, zhan2021deepthermal, zhan2021model} estimate the epistemic uncertainty from value functions or learned models to penalize OOD data. Finally, in-sample learning methods \cite{one-step,iql,por} learn the value function entirely within data to avoid directly querying the Q-function on OOD actions produced by policies. The offline RL component of OMAC shares a similar ingredient with in-sample learning methods, which enjoys the benefit of stable and decoupled learning of value functions and policies.

{\bfseries Multi-agent reinforcement learning.} The complexity of multi-agent decision-making problems is reflected in their huge joint action spaces \cite{b8}. In recent years, the CTDE framework~\cite{b5,b6} has become a popular choice to separate agents' learning and execution phases to tackle the exploding action space issue. In CTDE, agents are trained in a centralized manner with global information but learn decentralized policies to make decisions in individual action spaces during execution. Representative MARL algorithms under CTDE framework are the value decomposition methods \cite{vdn, qmix, qtran, qplex}, which decompose the global Q-function into a combination of local Q-functions for scalable multi-agent policy learning.

There have been a few recent attempts to design MARL algorithms under the offline setting. For example, ICQ \cite{icq} uses importance sampling to implicitly constrain policy learning on OOD samples under the CTDE framework. OMAR \cite{omar} extends multi-agent CQL \cite{cql} by adding zeroth-order optimization to avoid policy learning from falling into bad local optima. MADT \cite{madt} leverages the transformer architecture that has strong sequential data modeling capability to solve offline MARL tasks. 

However, the existing offline MARL algorithms simply combine well-established multi-agent frameworks with offline RL ingredients, rather than marry them in an organic way. All of these methods do not fully utilize the underlying decomposable problem structure for offline modeling. Moreover, they rely on the coupled learning process of local Q-functions and policies, which is prone to bootstrapping error and hard to trade-off between policy exploitation and data-related regularization, causing either instability during training or over-conservative policy learning~\cite{iql,doge}. In this work, we develop OMAC to tackle the above limitations of prior works, which enables perfect unification of both multi-agent modeling and offline learning.

\section{Preliminaries}
\subsection{Notations}
A fully cooperative multi-agent task can be described as a decentralized partially observable Markov decision process (Dec-POMDP) \cite{b7}. Dec-POMDP is formally defined by a tuple $G=\langle \mathcal{S}, \mathcal{A}, \mathcal{P}, r, \mathcal{Z}, \mathcal{O}, n, \gamma\rangle$. $s \in \mathcal{S}$ is the true state of the environment. $\mathcal{A}$ denotes the action set for each of the $n$ agents. At every time step, each agent $i \in \{1,2,...n\}$ chooses an action $a_i \in \mathcal{A}$, forming a joint action $\boldsymbol{a}=(a_1,a_2,...a_n) \in \mathcal{A}^n$. It causes a transition to the next state $s^{\prime}$ in the environment according to the transition dynamics $P\left(s^{\prime}|s, \boldsymbol{a}\right): \mathcal{S} \times \mathcal{A}^n \times \mathcal{S} \rightarrow[0,1]$. All agents share the same global reward function $r(s, \boldsymbol{a}): \mathcal{S} \times \mathcal{A}^n \rightarrow \mathbb{R}$. $\gamma \in [0,1)$ is a discount factor. In the partial observable environment, each agent draws an observation $o_i \in \mathcal{O}$ at each step from the observation function $\mathcal{Z}(s, i): \mathcal{S} \times N \rightarrow \mathcal{O}$.  The team of all agents aims to learn a set of policies $\pi=\{\pi_1,\cdots,\pi_n\}$ that maximize their expected discounted returns $\mathbb{E}_{\boldsymbol{a} \in \boldsymbol{\pi}, s \in \mathcal{S}}\left[\sum_{t=0}^{\infty} \gamma^{t} r(s_t, \boldsymbol{a}_t)\right]$. Under the offline setting, we are given a pre-collected dataset $\mathcal{D}$ and the policy learning is conducted entirely with the data samples in $\mathcal{D}$ without any environment interactions.

\begin{figure*}[!t]
\centering
\includegraphics[width=5.1in]{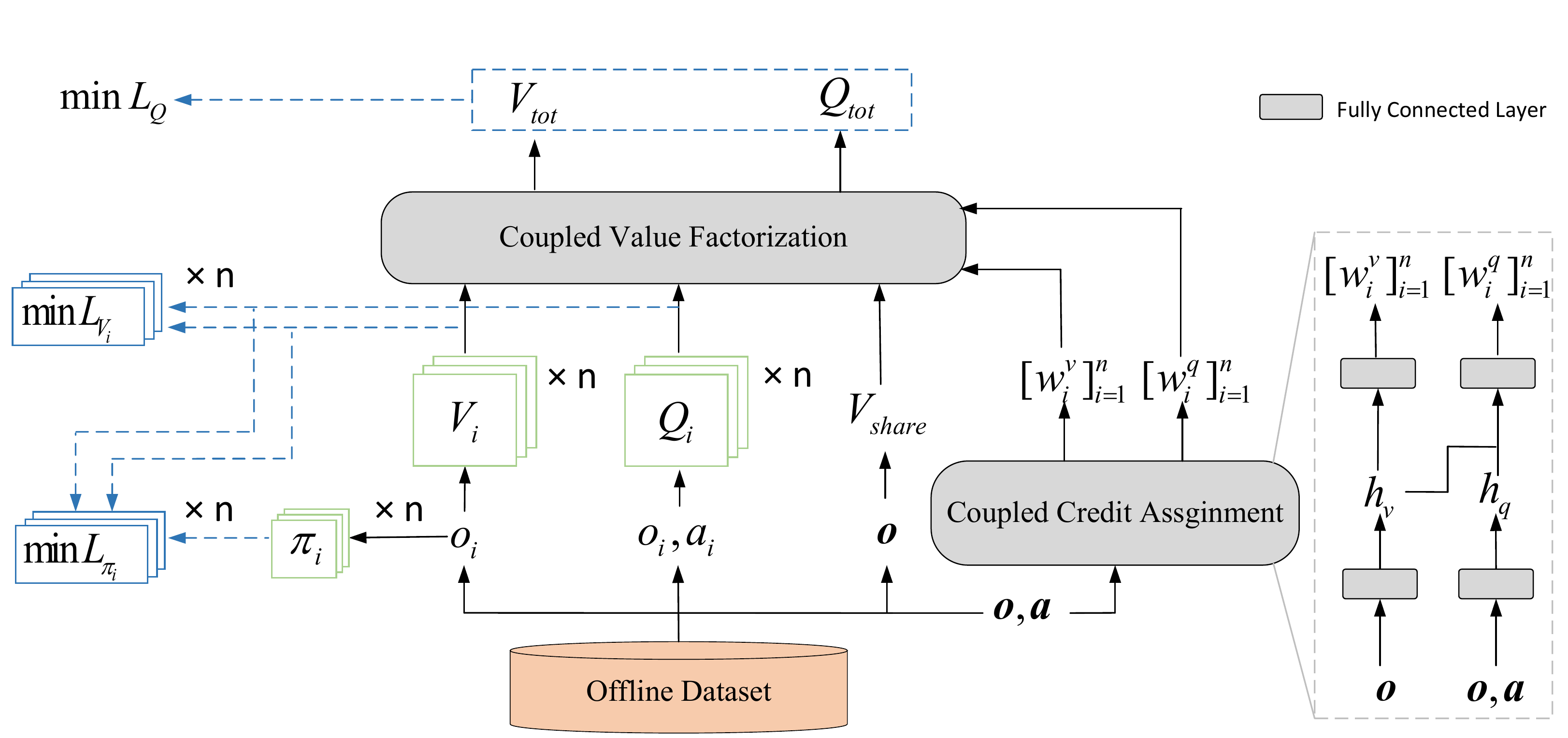}
\caption{The network structure of OMAC.}
\end{figure*}

\subsection{CTDE Framework and IGM Condition}
For multi-agent reinforcement learning, the joint action space increases exponentially with the increase of the number of agents, so it is difficult to query an optimal joint action from the global Q-function $Q_{tot}(\boldsymbol{o}, \boldsymbol{a})$. Besides, applying policy gradient updates with the global Q-function $Q_{tot}$ on the local policy of each agent and treating other agents as part of the environment may lead to poor performance. It suffers from the non-stationarity issue and poor convergence guarantees, as the global Q-function could be affected by the suboptimality of individual agents and potentially disturb policy updates of others~\cite{dop, variance}. 

To address these problems, Centralized Training with Decentralized Execution (CTDE) framework \cite{b5,b6} is proposed. During the training phase, agents have access to the full environment state and can share each other’s experiences. In the execution phase, each agent only selects actions according to its individual observation $o_{i}$. The benefit is that optimization at the individual level leads to optimization of the joint action space, which avoids the exponential growth of the joint action space with the number of agents. 

Under the CTDE framework, Individual-Global-Max (IGM) \cite{qtran} condition is proposed to realize the decomposition of the joint action space, which asserts the consistency between joint and individual greedy action selections in the global and local Q-functions $Q_{tot}$ and $Q_i$:
\begin{equation}
\arg \max _{\boldsymbol{a}} Q_{tot}(\boldsymbol{o}, \boldsymbol{a})=\left(\begin{array}{c}
\arg \max _{a_{1}} Q_{1}\left(o_{1}, a_{1}\right) \\
\vdots \\
\arg \max _{a_{n}} Q_{n}\left(o_{n}, a_{n}\right)
\end{array}\right)
\end{equation}

Through the IGM condition, MARL algorithms can learn the global Q-function and choose actions at the individual level. Meanwhile, it is also possible for offline MARL algorithms to impose constraints on the individual action space, while indirectly enforcing regulations on the joint action space.

\section{Method}
In this section, we formally present the coupled value factorization strategy of OMAC and explain how it can be integrated into effective offline learning.
OMAC decomposes both the state-value and Q-value functions, and also maintains a coupled credit assignment between $Q$ and $V$. With this scheme, OMAC can perform in-sample learning on the decomposed local state-value functions without the involvement of policies, which implicitly enables max-Q operation at the local level while avoiding distributional shift caused by evaluating OOD actions. Finally, the local policy can be separately learned with the well-learned value functions through advantage weighted regression~\cite{awr}.

\subsection{Coupled Value Factorization (CVF)}
\subsubsection{A refined value factorization strategy. }
In OMAC, we consider the following factorization on the global Q-value and state-value. For each agent, we define the local state-value function $V_i$ as the optimal value of the local Q-function $Q_i$. In particular, we decompose the global state-value function into a linear combination of local state-value functions $V_i(o_i)$ with weight function $w_i^v(\boldsymbol{o})$, as well as the shared component based on the full observation $V_{share}(\boldsymbol{o})$. The global Q-function is further decomposed as the state-value function plus a linear combination of local advantages $Q_i(o_i,a_i)-V_i(o_i)$ with weight function $w_i^q(\boldsymbol{o},\boldsymbol{a})$:
\begin{align}
&V_{tot}(\boldsymbol{o}) = \sum_{i=1}^{n} w^v_{i}(\boldsymbol{o}) V_{i}\left({o}_i \right) + V_{share}(\boldsymbol{o}) \notag
\\
&Q_{tot}(\boldsymbol{o}, \boldsymbol{a}) 
= V_{tot}(\boldsymbol{o}) +\sum_{i=1}^{n} {w}^q_{i}(\boldsymbol{o}, \boldsymbol{a}) (Q_{i}\left({o}_i, a_{i}\right)-V_{i}\left(o_i \right)) \notag\\
&V_{i}(o_i) = \underset{a_i} {\max}\;Q_{i}(o_i, a_i), \; {w}^v_{i},  {w}^q_{i} \ge 0,\; \forall i=1,\cdots,n
\label{eq:overall_decomp}
\end{align} 
where we enforce the positivity condition on weight functions $w_i^v(\boldsymbol{o})$ and $w_i^q(\boldsymbol{o},\boldsymbol{a})$. It can be shown that this factorization strategy has a number of attractive characteristics.

\begin{property}
The definition of global Q function in Eq.(\ref{eq:overall_decomp}) satisfies $\underset{\boldsymbol{a}}{\max}\; Q_{tot}(\boldsymbol{o}, \boldsymbol{a}) = V_{tot}(\boldsymbol{o})$ and the IGM condition.
% \vspace{-8pt}
\end{property}
\begin{proof}
Since $V_{i}(o_i) = \underset{a_i}{\max}\; Q_{i}(o_i, a_i)$ for all $a_i$ and $w^q_i \geq 0$, we have $\sum_{i=1}^{n} {w}^q_{i}(\boldsymbol{o}, \boldsymbol{a}) (Q_{i}\left({o}_i, a_{i}\right)-V_{i}\left(o_i \right)) \leq 0$.
Therefore $Q_{tot}(\boldsymbol{o}, \boldsymbol{a}) \leq V_{tot}(\boldsymbol{o})$ and the maximal value of global Q-function  $\underset{\boldsymbol{a}}{\max}\;Q_{tot}(\boldsymbol{o}, \boldsymbol{a}) = V_{tot}(\boldsymbol{o})$ is only achievable when all local Q-functions achieve their maximum (i.e., $\underset{a_i}{\max}\; Q_{i}(o_i, a_i)=V_{i}(o_i)$).
\end{proof}

Second, in Eq. (~\ref{eq:overall_decomp}) the globally shared information is partly captured in the shared component of the state-value function $V_{share}(\boldsymbol{o})$, which is free of the joint actions and not affected by the OOD actions under offline learning. 
The information sharing across agents and credit assignment are captured in weight functions $w^v_{i}(\boldsymbol{o})$, $w^q_{i}(\boldsymbol{o}, \boldsymbol{a})$, and the local value functions $V_i(o_i)$ and $Q_i(o_i,a_i)$ are now only responsible for local observation and action information. The shared and the local information are separated, and agents can make decisions by using local $V_i$ and $Q_i$ at an individual level. As we will show in the later section, this structure also leads to a particularly nice form to incorporate in-sample offline value function learning.

\subsubsection{Coupled credit assignment (CCA)} The value factorization strategy in Eq.~\ref{eq:overall_decomp} can potentially allow too much freedom on the weight function $w^v(\boldsymbol{o})$ and $w^q(\boldsymbol{o}, \boldsymbol{a})$. Ideally, the credit assignment on global state-value and Q-value should be coupled and correlated. Thus, we further design a coupled credit assignment scheme implemented with neural networks to enforce such consistency, which also leads to more regularized relationship between $w^v(\boldsymbol{o})$ and $w^q(\boldsymbol{o}, \boldsymbol{a})$:
\begin{equation}
\begin{aligned}
h_v(\boldsymbol{o}) &= f^{(1)}_v (\boldsymbol{o}),  \quad h_q(\boldsymbol{o}) = f^{(1)}_q(\boldsymbol{o}, \boldsymbol{a}) \\
w^v_i(\boldsymbol{o}) &= |{f^{(2)}_v}(h_v(\boldsymbol{o}))| \\
w^q_i(\boldsymbol{o}, \boldsymbol{a}) &= |f^{(2)}_q(\mathrm{concat}(h_v(\boldsymbol{o}), h_{q}(\boldsymbol{o}, \boldsymbol{a}))|
\end{aligned}
\end{equation}
where $f^{(1)}_v$,  $f^{(2)}_v$, $f^{(1)}_q$, and $f^{(2)}_q$ are hidden neural network layers. We take absolute values on the network outputs to ensure positivity condition of $w^v(\boldsymbol{o})$ and $w^q(\boldsymbol{o}, \boldsymbol{a})$.

CCA enforces a coupled relationship between $w^v(\boldsymbol{o})$ and $w^q(\boldsymbol{o}, \boldsymbol{a})$ by sharing the same observation encoding structure, which makes training on $w^q(\boldsymbol{o}, \boldsymbol{a})$ can also update the parameters of $w^v(\boldsymbol{o})$. This coupling relationship allows more stable credit assignment between state-value and Q-value functions on the same observation $\boldsymbol{o}$. It can also improve data efficiency during training, which is particularly important for the offline setting, since the size of the real-world dataset can be limited.

\subsection{Integrating Offline Value Function Learning}
\subsubsection{Local value function learning.} In the proposed coupled value factorization, the condition of $V_{i}(o_i) = \underset{a_i}{\max}\; Q_{i}(o_i, a_i)$ needs to be forced. Directly implementing this condition can be problematic under the offline setting, as it could lead to queries on OOD actions, causing distributional shift and overestimated value functions. To avoid this issue, one need to instead consider the following condition:
\begin{equation}
V_{i}(o_i) = \underset{ a_i \in \mathcal{A}, \text { s.t. } \pi_{\beta}\left(a_i \mid o_i\right)>0} {\max} Q_{i}(o_i, {a_i}), \label{eq:local_V_cond}
\end{equation}
where $\pi_{\beta}$ is the behavior policy of the offline dataset. Drawing inspiration from offline RL algorithm IQL \cite{iql}, we can implicitly perform the above max-Q operation by leveraging the decomposed state-value functions $V_i$, while also avoiding explicitly learning the behavior policy $\pi_{\beta}$. This can be achieved by learning the local state-value function $V_i(o_i)$ as the upper expectile of target local Q-values $\bar{Q}_i(o_i, a_i)$ based on $(o_i, a_i)$ samples from dataset $\mathcal{D}$. For each agent, its local state-value function $V_i(o_i)$ is updated by minimizing the following objective:
\begin{equation}
L_{V_i}= \mathbb{E}_{(o_i, a_i) \sim \mathcal{D}}\left[L_{2}^{\tau}\left(\bar{Q}_{i}(o_i, a_i)-V_{i}(o_i)\right)\right] \label{eq:local_v},
\end{equation}
where $L_{2}^{\tau}(u)=|\tau-{1}(u<0)| u^{2}$ denotes the expectile regression, which solves an asymmetric least-squares problem given the expectile $\tau\in (0,1)$. When $\tau=0.5$, it reduces to the common least square error. When $\tau \rightarrow 1$, the objective Eq. (\ref{eq:local_v}) makes $V_i(o_i)$ to approximate the maximum of the target local Q-function $\bar{Q}_i(o_i,a_i)$ over actions $a_i$ constrained to the dataset actions.

\subsubsection{Global value function learning. } With the estimated local state-value function $V_i(o_i)$, we can then use it to update the global value functions $V_{tot}$ and $Q_{tot}$, which are essentially parameterized by the shared state-value function $V_{share}(\boldsymbol{o})$, local Q-value function $Q_i(o_i,a_i)$, as well as the credit assignment weight functions $w_i^v(\boldsymbol{o})$ and $w_i^q(\boldsymbol{o}, \boldsymbol{a})$ as in Eq. (\ref{eq:overall_decomp}). These terms can be thus jointly learned by minimizing the following objective:
% Then We can then use this estimate to update the Q function with the MSE loss:
\begin{equation}
% \scalebox{0.93}{$
L_{Q}=\mathbb{E}_{\left(\boldsymbol{o}, \boldsymbol{a}, \boldsymbol{o}^{\prime}\right) \sim \mathcal{D}}\left[\left(r(\boldsymbol{o}, \boldsymbol{a})+\gamma V_{tot}\left(\boldsymbol{o}^{\prime}\right)-Q_{tot}(\boldsymbol{o}, \boldsymbol{a}\right))^{2}\right]
% $} 
\label{eq:q}.
\end{equation}

It should be noted that the learning of both local and global value functions in OMAC is completely performed in an in-sample manner without the involvement of the agent policies $\pi_i$. This separated learning process greatly improves the learning stability of both the local and global value functions, as it avoids querying OOD actions from the policies during Bellman evaluation, which is the main contributor to the distributional shift in offline RL.

\subsection{Local Policy Learning}
Although our method learns the approximated optimal local and global Q-functions, it does not explicitly represent the local policy of each agent for decentralized execution. Therefore a separate policy learning step is needed. With the learned local state-value and Q-value functions $Q_i$ and $V_i$, we can extract the local policies by maximizing the local advantage values with KL-divergence constraints to regularize the policy to stay close to the behavior policy:
\begin{equation}
\begin{aligned}
&\underset{\pi_{i}}{\max }\; {\mathbb{E}}_{{a}_i \sim \pi_i (a_i|o_i)}[Q_i(o_i, a_i)-V_{i}(o_i)]\\
&\text {s.t. }\; D_{\mathrm{KL}}\left(\pi_i(\cdot \mid {o_i}) \| \pi_{\beta, i}(\cdot \mid {o_i})\right) \leq \epsilon
\end{aligned}
\end{equation}

The above optimization problem can be shown equivalent to minimizing the following advantage-weighted regression objective~\cite{awr, awac} by enforcing the KKT condition, which can be solved by sampling directly from the dataset without the need to explicitly learn the local behavior policy $\pi_{\beta, i}$:
\begin{equation}
\begin{aligned}
L_{\pi_i} =\mathbb{E}_{(o_i, a_i) \sim \mathcal{D}}\left[\exp \left(\beta\left(Q_i(o_i, a_i)-V_{i}(o_i)\right)\right) \log \pi_i(a_i|o_i)\right],
\end{aligned} 
\label{eq:policy}
\end{equation}
where $\beta$ is a temperature parameter. For smaller $\beta$ values, the algorithm is more conservative and produces policies closer to behavior cloning. While for larger values, it attempts to recover the maximum of the local Q-function. 

The detailed algorithm of OMAC is summarized below.

\begin{algorithm}[H]
\caption{Pseudocode of OMAC}
\begin{algorithmic}[1]
\REQUIRE{Offline dataset $\mathcal{D}$. hyperparameters $\tau$ and $\beta$.}

% \FOR{agent $i$=1, 2, ... $n$}
\STATE{Initialize local state-value network $V_i$, local Q-value network $Q_i$ and its target network $\bar{Q}_i$, and policy network ${\pi}_i$ for agent $i$=1, 2, ... $n$.}
% \ENDFOR
\STATE{Initialize the shared state-value network $V_{share}$ as well as weight function network $w^v$ and $w^q$.}

\FOR{$t=1, \cdots,$ \emph{max-value-iteration}}
\STATE {Sample batch transitions ${(\boldsymbol{o}, \boldsymbol{a}, r, \boldsymbol{o}^{\prime})}$ from $\mathcal{D}$}

\STATE {Update local state-value function $V_i(o_i)$ for each agent $i$ via Eq. (\ref{eq:local_v}).}

\STATE {Compute $V_{tot}(\boldsymbol{o}^{\prime})$,  $Q_{tot}(\boldsymbol{o}, \boldsymbol{a})$ via Eq. (\ref{eq:overall_decomp}).}

\STATE {Update local Q-value network $Q_i(o_i, a_i)$, weight function network $w^v(\boldsymbol{o})$ and $w^q(\boldsymbol{o},\boldsymbol{a})$ with objective Eq. (\ref{eq:q}).}

\STATE {Soft update target network $\bar{Q}_i(o_i, a_i)$ by ${Q}_i(o_i, a_i)$ for each agent $i$.}

\ENDFOR

\FOR{$t=1, \cdots,$ \emph{max-policy-iteration}}
\STATE {Update local policy network $\pi_i$ for each agent $i$ via Eq. (\ref{eq:policy}).}
\ENDFOR

\end{algorithmic}
\end{algorithm}

\section{Analysis}
\subsection{Optimality Analysis}
In this section, we will show that OMAC can recover the optimal value function under the dataset support constraints. We can show in the following theorem that the learned local and global Q-functions approximate the optimal local and global Q-functions with the data support constraints as the expectile $\tau\rightarrow 1$:
\begin{theorem} \label{theorem:vf}
Given the value factorization strategy in Eq. (\ref{eq:overall_decomp}) and expectile $\tau$, we define $V_i^{\tau}(o_i)$ as the $\tau^{th}$ expectile of $Q_i^{\tau}(o_i, a_i)$ (e.g., $\tau=0.5$ corresponds to the standard expectation) and define $ V^{\tau}_{tot}(\boldsymbol{o}) = \sum_{i=1}^{n} w^v_{i}(\boldsymbol{o}) V^{\tau}_{i}\left({o}_i \right) + V_{share}(\boldsymbol{o})$, then we have
\begin{align}
    \lim _{\tau \rightarrow 1} V_i^{\tau}(o_i)&=\max _{\substack{a_i \in \mathcal{A} \\ \mathrm{ s.t. } \pi_{\beta,i}(a_i)>0}} Q_i^{*}(o_i, a_i)  \label{theorem:local} \\
    \lim _{\tau \rightarrow 1} V_{tot}^{\tau}(\boldsymbol{o})&=\max _{\substack{\boldsymbol{a} \in \mathcal{A}^n \\ \mathrm{ s.t. } \pi_{\beta}(\boldsymbol{a})>0}} Q_{tot}^{*}(\boldsymbol{o}, \boldsymbol{a}) \label{theorem:global}
\end{align}
\end{theorem}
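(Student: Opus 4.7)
The strategy is to combine three ingredients: an expectile convergence lemma (the same one behind IQL's in-sample max trick), a support-constrained Bellman fixed-point argument, and Property~1 together with the decentralized factorization of the behavior policy $\pi_{\beta}(\boldsymbol{a}\mid\boldsymbol{o})=\prod_i \pi_{\beta,i}(a_i\mid o_i)$. I would prove Eq.~(\ref{theorem:local}) first and then lift it to Eq.~(\ref{theorem:global}) using Property~1 and the support factorization.

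First I would establish the pointwise expectile limit. For any bounded random variable $X$, the $\tau$-th expectile $e_\tau(X)$ converges to $\operatorname{ess\,sup}(X)$ as $\tau\to 1$; this reduces to the asymmetric first-order optimality condition for $L_2^{\tau}$. Applying it conditionally with $X = Q_i^{\tau}(o_i, A_i)$ and $A_i \sim \pi_{\beta,i}(\cdot\mid o_i)$, and using that $V_i^{\tau}(o_i)$ is by definition the minimizer of Eq.~(\ref{eq:local_v}), gives
\[
\lim_{\tau\to 1} V_i^{\tau}(o_i) \;=\; \max_{a_i:\,\pi_{\beta,i}(a_i\mid o_i)>0} Q_i^{\tau}(o_i, a_i).
\]

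Second I would promote this pointwise limit into the support-constrained optimal Bellman equation. Substituting the expression above into the decomposition of $V_{tot}^{\tau}$ in Eq.~(\ref{eq:overall_decomp}) turns the target $r+\gamma V_{tot}^{\tau}(\boldsymbol{o}')$ in Eq.~(\ref{eq:q}) into the support-restricted Bellman optimality target for $Q_{tot}^{\tau}$. Because this restricted Bellman operator is a $\gamma$-contraction on bounded functions, it has a unique fixed point, which I would identify with the in-support optimal local Q-function $Q_i^{*}$ (and correspondingly $Q_{tot}^{*}$), thereby establishing Eq.~(\ref{theorem:local}).

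For Eq.~(\ref{theorem:global}), I would exploit the CTDE structure to write $\pi_\beta(\boldsymbol{a}\mid\boldsymbol{o})=\prod_i \pi_{\beta,i}(a_i\mid o_i)$, so that the joint support factorizes as $\operatorname{supp}(\pi_\beta(\cdot\mid\boldsymbol{o}))=\prod_i \operatorname{supp}(\pi_{\beta,i}(\cdot\mid o_i))$. Property~1 (IGM together with $\max_{\boldsymbol{a}} Q_{tot}=V_{tot}$), applied to these restricted action sets and using $w_i^q\ge 0$, then yields
\[
\max_{\boldsymbol{a}:\,\pi_\beta(\boldsymbol{a}\mid\boldsymbol{o})>0} Q_{tot}^{*}(\boldsymbol{o},\boldsymbol{a}) \;=\; \sum_{i=1}^n w_i^v(\boldsymbol{o})\, V_i^{*}(o_i) + V_{share}(\boldsymbol{o}),
\]
and linearity of the factorization combined with the first part applied agent-wise matches this to $\lim_{\tau\to 1} V_{tot}^{\tau}(\boldsymbol{o})$. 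The main obstacle I anticipate lies in the second step: $V_i^{\tau}$ and $Q_i^{\tau}$ are trained jointly with the CCA weight networks $w_i^v, w_i^q$ and $V_{share}$, so one cannot treat $Q_i^{\tau}$ as fixed while sending $\tau\to 1$. A careful simultaneous-limit argument, relying on uniform boundedness of the iterates and continuity of the weight/sharing networks in $\tau$, is required to justify interchanging the limit with the Bellman recursion and to rule out degenerate fixed points.
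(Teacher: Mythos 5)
Your overall skeleton matches the paper's: establish the local limit via the expectile-convergence lemma, then lift it to the global statement through the factorization in Eq.~(\ref{eq:overall_decomp}) and the product structure of the behavior-policy support. However, there are two concrete problems. First, your middle step replaces the paper's argument with a support-restricted Bellman contraction on the \emph{local} $Q_i$, but $Q_i$ has no standalone Bellman equation in OMAC --- it is only a component of the factorized $Q_{tot}$ trained through the global TD loss in Eq.~(\ref{eq:q}), so ``the restricted Bellman operator is a $\gamma$-contraction with unique fixed point $Q_i^{*}$'' is not a well-posed claim at the agent level. The paper sidesteps this entirely: it simply posits $Q_i^{*}$ as the in-support optimal local Q-function and uses the sandwich $V_i^{\tau}(o_i)=\mathbb{E}^{\tau}_{a_i\sim\pi_{\beta,i}}[Q_i^{\tau}(o_i,a_i)]\leq \max_{a_i:\pi_{\beta,i}(a_i)>0}Q_i^{\tau}\leq \max_{a_i:\pi_{\beta,i}(a_i)>0}Q_i^{*}$, never needing a fixed-point identification.

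Second, and more importantly, you explicitly flag the simultaneous-limit issue ($V_i^{\tau}$ is the $\tau$-expectile of a quantity $Q_i^{\tau}$ that itself moves with $\tau$) as ``an obstacle requiring a careful argument'' and then leave it unresolved --- but this is precisely the step the theorem hinges on, and your first displayed equation is symptomatic: its right-hand side still carries a $\tau$ inside a $\lim_{\tau\to 1}$. The paper's resolution is a monotonicity lemma (its Lemma~2, imported from IQL): $V_i^{\tau_1}(o_i)\leq V_i^{\tau_2}(o_i)$ for $\tau_1\leq\tau_2$, which combined with the upper bound above makes $\{V_i^{\tau}\}$ a bounded monotone family converging to its supremum $\max_{a_i:\pi_{\beta,i}(a_i)>0}Q_i^{*}(o_i,a_i)$; the same device then handles $V_{tot}^{\tau}$. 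Without some substitute for that monotonicity argument, your plan does not close. Your treatment of the global part is otherwise sound and actually makes explicit something the paper leaves implicit, namely that $\operatorname{supp}(\pi_{\beta}(\cdot\mid\boldsymbol{o}))=\prod_i\operatorname{supp}(\pi_{\beta,i}(\cdot\mid o_i))$ is needed for $\sum_i w_i^v(\boldsymbol{o})\max_{a_i}Q_i^{*}+V_{share}(\boldsymbol{o})$ to equal the joint in-support maximum of $Q_{tot}^{*}$.
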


Let $m_{\tau}$ be the $\tau \in(0,1)$ expectile solution to the asymmetric least square problem: $\underset{m_{\tau}}{\arg \min } \mathbb{E}_{x \sim X}\left[L_{2}^{\tau}\left(x-m_{\tau}\right)\right]$. We re-use two lemmas from \citet{iql} related to the expectile properties of a random variable 
to prove Theorem 1:

\begin{lemma} Let $X$ be a random variable with a bounded support and the supremum of the support is $x^*$, then
\begin{equation*}
\begin{aligned}
\lim_{\tau \rightarrow 1} m^{\tau}=x^{*}
\end{aligned}
\end{equation*}
\end{lemma}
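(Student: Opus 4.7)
The plan is to use the first-order optimality condition for the expectile. Writing the objective as $F(m)=\tau\,\mathbb{E}[(X-m)^{2}\mathbf{1}_{X\ge m}]+(1-\tau)\,\mathbb{E}[(X-m)^{2}\mathbf{1}_{X<m}]$, note that $F$ is smooth and strictly convex on $\mathbb{R}$ under the bounded-support assumption, so $m^{\tau}$ is uniquely characterized by $F'(m^{\tau})=0$, which after simplification reads
\begin{equation*}
\tau\,\mathbb{E}[(X-m^{\tau})_{+}]=(1-\tau)\,\mathbb{E}[(m^{\tau}-X)_{+}].
\end{equation*}
All subsequent work amounts to extracting asymptotic information from this identity as $\tau\uparrow 1$.

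First I would dispatch the easy upper bound $m^{\tau}\le x^{*}$. If $m>x^{*}$, then $(X-m)_{+}=0$ almost surely while $(m-X)_{+}>0$ with positive probability, so the left-hand side of the first-order condition vanishes while the right-hand side is strictly positive, contradicting $\tau\in(0,1)$. Consequently $\limsup_{\tau\to 1}m^{\tau}\le x^{*}$.

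For the matching lower bound, I would argue by contradiction that $\liminf_{\tau\to 1}m^{\tau}\ge x^{*}$. Suppose some subsequence $\tau_{k}\uparrow 1$ satisfies $m^{\tau_{k}}\to c<x^{*}$. Choose $\delta>0$ with $c+2\delta<x^{*}$; since $x^{*}$ is the supremum of the support, $p:=P(X>x^{*}-\delta)>0$. For all sufficiently large $k$ one has $m^{\tau_{k}}\le x^{*}-2\delta$, so on the event $\{X>x^{*}-\delta\}$ we have $(X-m^{\tau_{k}})_{+}\ge\delta$, whence $\mathbb{E}[(X-m^{\tau_{k}})_{+}]\ge\delta p>0$. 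The left-hand side of the first-order condition is therefore bounded below by $\tau_{k}\delta p\to\delta p>0$, while the right-hand side, using that the support lies in some bounded interval $[a,b]$, is at most $(1-\tau_{k})(b-a)\to 0$. This contradiction, combined with the upper bound, yields $\lim_{\tau\to 1}m^{\tau}=x^{*}$.

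The argument is elementary once the first-order condition is in hand. The only mildly delicate step is the passage from ``supremum of the support'' to the pointwise lower bound $P(X>x^{*}-\delta)>0$, which follows immediately from the measure-theoretic definition of support but must be stated with some care when the distribution has an atom at $x^{*}$ or approaches $x^{*}$ only as a limit. I expect this to be the main technical obstacle to a fully airtight write-up; the rest is a short and direct application of the first-order condition combined with boundedness of the support.
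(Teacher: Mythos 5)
Your proof is correct and complete, but it takes a genuinely different route from what the paper itself writes down. The paper does not really prove this lemma: it defers to \citet{iql} and offers a two-sentence sketch resting on the monotonicity $m^{\tau_1}\le m^{\tau_2}$ for $\tau_1\le\tau_2$ together with the assertion that the expectiles ``have the same supremum $x^*$''; monotonicity plus boundedness only yields existence of $\lim_{\tau\to 1}m^{\tau}$, and identifying that limit with $x^*$ — the actual content of the lemma — is left to the citation. Your argument via the first-order optimality condition $\tau\,\mathbb{E}[(X-m^{\tau})_{+}]=(1-\tau)\,\mathbb{E}[(m^{\tau}-X)_{+}]$ supplies exactly that missing identification: the upper bound $m^{\tau}\le x^*$ falls out of the same identity, and the subsequence contradiction (left side bounded below by $\tau_k\delta p$ with $p=P(X>x^*-\delta)>0$, right side at most $(1-\tau_k)(b-a)\to 0$) pins the limit at $x^*$. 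This is essentially the argument in the cited IQL appendix, so your write-up restores the proof the paper outsources, and it has the added benefit of not needing the monotonicity lemma at all. Two minor polish points: strict convexity is not actually needed (any minimizer of the convex, $C^1$ objective satisfies the first-order condition, and uniqueness is immaterial to the limit claim), and the delicacy you flag about the support near $x^*$ dissolves immediately — if $P(X>x^*-\delta)=0$ then the support would be contained in $(-\infty,\,x^*-\delta]$, contradicting $\sup(\mathrm{supp}\,X)=x^*$, regardless of whether there is an atom at $x^*$ or the mass only accumulates toward it.
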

The proof is provided in \citet{iql}. It follows as the expectiles of a random variable $X$ have the same supremum $x^*$, and we have $m^{\tau_1} \textless m^{\tau_2}$ for all $\tau_1 \textless \tau_2$. Hence we can obtain the above limit according to the property of bounded monotonically non-decreasing functions.

Let $V_i^{\tau}(o_i)$ be the $\tau$-expectile of $V_i(o_i)$ in OMAC, then we have the following lemma by extending the Lemma 2 of ~\citet{iql} to multi-agent setting:

\begin{lemma}
For all $o_i$, $\tau_1$ and $\tau_2$ such that $\tau_1 \le \tau_2$ we have
$V_i^{\tau_{1}}(o_i) \leq V_i^{\tau_{2}}(o_i)$.
\end{lemma}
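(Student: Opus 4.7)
The plan is to reduce the claim to a one-dimensional monotonicity statement about the expectile of a real-valued random variable, and then derive that monotonicity directly from the first-order optimality condition of the asymmetric least-squares objective in Eq.~(\ref{eq:local_v}). The multi-agent structure and the specific form of $\bar{Q}_i$ play no role beyond fixing the observation $o_i$ and inducing, through the conditional behavior policy $\pi_{\beta,i}(\cdot\mid o_i)$, a real-valued random variable $X := \bar{Q}_i(o_i, a_i)$ whose $\tau$-expectile is exactly $V_i^\tau(o_i)$. So the task reduces to showing that the expectile of a fixed random variable is non-decreasing in $\tau$.

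First I would write down the stationarity condition obtained by differentiating $\mathbb{E}[L_2^\tau(X-m)]$ in $m$ and setting the derivative to zero, namely
\[
F(m,\tau) \;:=\; \tau\, \mathbb{E}[(X-m)_+] \;-\; (1-\tau)\, \mathbb{E}[(m-X)_+] \;=\; 0,
\]
so that $V_i^\tau(o_i)$ is characterized as a root of $F(\cdot,\tau)$. Next I would establish two structural monotonicities of $F$: for fixed $\tau\in(0,1)$, the map $m\mapsto F(m,\tau)$ is continuous and non-increasing, because enlarging $m$ shrinks $(X-m)_+$ and grows $(m-X)_+$; for fixed $m$, the map $\tau\mapsto F(m,\tau)$ is non-decreasing, because the coefficient on the non-negative quantity $\mathbb{E}[(X-m)_+]$ grows while that on the non-negative quantity $\mathbb{E}[(m-X)_+]$ shrinks.

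Given $\tau_1 \le \tau_2$, I would then evaluate $F$ at $m = V_i^{\tau_1}(o_i)$ to obtain $F(V_i^{\tau_1}(o_i),\tau_2) \ge F(V_i^{\tau_1}(o_i),\tau_1) = 0$, and conclude via the non-increasing behaviour of $F(\cdot,\tau_2)$ in its first argument that any root $V_i^{\tau_2}(o_i)$ must lie at or above $V_i^{\tau_1}(o_i)$, which is exactly the lemma. The main (and rather mild) obstacle is handling degenerate conditional distributions of $X$ given $o_i$, such as a point mass or cases where the set of roots is an interval; there $F(\cdot,\tau)$ is only weakly decreasing, but the same sign comparison still produces the weak inequality asserted by the lemma, and reduces to equality whenever $X$ is conditionally constant. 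Since this is the natural per-observation lift of Lemma~1 of \citet{iql} to the local target Q-values $\bar{Q}_i(o_i,a_i)$, no additional multi-agent machinery beyond conditioning on $o_i$ is required.
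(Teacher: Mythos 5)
Your core computation is correct and is, in substance, a self-contained proof of the one-dimensional fact that the paper simply outsources to \cite{iql}: the stationarity condition $F(m,\tau)=\tau\,\mathbb{E}[(X-m)_+]-(1-\tau)\,\mathbb{E}[(m-X)_+]=0$, together with $\partial F/\partial m=-\tau P(X>m)-(1-\tau)P(X\le m)<0$ and $\partial F/\partial \tau=\mathbb{E}[|X-m|]\ge 0$, does yield that the $\tau$-expectile of a \emph{fixed} random variable is non-decreasing in $\tau$. (Incidentally, the strict negativity of $\partial F/\partial m$ for $\tau\in(0,1)$ means the asymmetric least-squares objective is strictly convex in $m$, so the root is unique and your worry about degenerate root intervals is moot.)

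There is, however, a genuine gap in your reduction step. You assert that ``the specific form of $\bar{Q}_i$ plays no role'' and that the claim reduces to monotonicity in $\tau$ of the expectile of a single fixed $X=\bar{Q}_i(o_i,a_i)$. But in the lemma as it is stated and used in Theorem \ref{theorem:vf}, $V_i^{\tau}(o_i)$ is the $\tau$-expectile of $Q_i^{\tau}(o_i,a_i)$, and $Q_i^{\tau}$ itself depends on $\tau$: it is regressed (through $Q_{tot}$ in Eq.~(\ref{eq:q})) onto the target $r+\gamma V_{tot}^{\tau}(\boldsymbol{o}')$, which feeds $V_i^{\tau}$ back into the next Bellman iterate. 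For $\tau_1\le\tau_2$ you are therefore comparing expectiles of two \emph{different} random variables, $Q_i^{\tau_1}(o_i,\cdot)$ and $Q_i^{\tau_2}(o_i,\cdot)$, and your argument only delivers $m_{\tau_1}[Q_i^{\tau_1}]\le m_{\tau_2}[Q_i^{\tau_1}]$, where $m_{\tau}[\cdot]$ denotes the $\tau$-expectile operator. To close the gap you need a second monotonicity --- $X\le Y$ almost surely implies $m_{\tau}[X]\le m_{\tau}[Y]$, which follows from the same analysis of $F$ --- plus an induction over Bellman iterations establishing $Q_i^{(k),\tau_1}\le Q_i^{(k),\tau_2}$ for every iterate $k$; chaining the two gives $m_{\tau_1}[Q_i^{\tau_1}]\le m_{\tau_2}[Q_i^{\tau_1}]\le m_{\tau_2}[Q_i^{\tau_2}]$. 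This is exactly how the corresponding lemma is proved in \cite{iql}; the paper's own proof is a one-line appeal to that result, whereas your write-up silently discards the $\tau$-dependence of the target.
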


\begin{proof}
In OMAC, the learning objective of the local state-value function $V_i(o_i)$, $L_{V_i}= \mathbb{E}_{(o_i, a_i) \sim \mathcal{D}}[L_{2}^{\tau}(\bar{Q}_{i}(o_i, a_i)-V_{i}(o_i))]$ has the same form as the state-value function IQL under the single-agent case. Hence the conclusion of Lemma 1  ($V_{\tau_1}(s)\leq V_{\tau_2}(s)$ for $\forall \tau_1 < \tau_2$) in the IQL paper~\cite{iql} also carries over with the state-value function $V(s)$ being replaced by local state-value functions $V_i(o_i)$ under the multi-agent case. 

\end{proof}

Next, we use the above lemmas to formally prove Theorem 1.

\begin{proof}
We first prove the local part Eq. (\ref{theorem:local}) of Theorem 1. As the local state-value function $V_i(o_i)$ is learned through expectile regression, therefore, for the ${\tau}$-expectile of local state-value $V_i^{\tau}(o_i)$ and an optimal Q-value function constrained to the dataset $Q_i^*(o_i, a_i)$, we have:
\begin{equation}
\begin{aligned}
V_i^{\tau}(o_i) &= \mathbb{E}_{a_i \sim \pi_{\beta, i}(\cdot \mid o_i)}^{\tau}\left[Q^{\tau}_i(o_i, a_i)\right] 
\\
& \leq \max_{\substack{a_i \in \mathcal{A} \\ \mathrm{ s.t. } \pi_{\beta,i}(a_i)>0}} Q^{\tau}_i(o_i, a_i) \leq \max_{\substack{a_i \in \mathcal{A} \\ \mathrm{ s.t. } \pi_{\beta,i}(a_i)>0}} Q^*_i(o_i, a_i)\label{eq:proof_local}
\end{aligned}
\end{equation}
The inequality follows from the fact that the convex combination is smaller than the maximum. 

Thus, $V_i^{\tau}(o_i)$ is a random variable with bounded support and its supremum is $\max _{a_i \in \mathcal{A}_i, \mathrm{ s.t. } \pi_{\beta,i}(a_i)>0} Q^*_i(o_i, a_i)$. Applying Lemma 1, we can obtain the local condition:
\begin{equation*}
\begin{aligned}
\lim _{\tau \rightarrow 1} V_i^{\tau}(o_i)=\max _{\substack{a_i \in \mathcal{A}_i \\ \mathrm{ s.t. } \pi_{\beta,i}(a_i)>0}} Q_i^{*}(o_i, a_i).
\end{aligned}
\end{equation*}
Moreover, based on Lemma 1 and the second inequality in Eq.~(\ref{eq:proof_local}), it's also easy to see:
\begin{equation}
\begin{aligned}
\lim _{\tau \rightarrow 1} Q_i^{\tau}(o_i,a_i)=\max _{\substack{a_i \in \mathcal{A}_i \\ \mathrm{ s.t. } \pi_{\beta,i}(a_i)>0}} Q_i^{*}(o_i, a_i).\label{eq:q_local_lim}
\end{aligned}
\end{equation}

For the global state-value and Q-value functions, according to the couple value factorization strategy in Eq.~(\ref{eq:overall_decomp}), we have:
\begin{displaymath}
\begin{aligned}
    &Q_{tot}^{\tau}(\boldsymbol{o}, \boldsymbol{a}) = V_{tot}^{\tau}(\boldsymbol{o}) +\sum_{i=1}^{n} {w}^q_{i}(\boldsymbol{o}, \boldsymbol{a}) (Q_{i}^{\tau}\left({o}_i, a_{i}\right)-V_{i}^{\tau}\left(o_i \right)) \\
    &= \sum_{i=1}^{n} w^v_{i}(\boldsymbol{o}) V_{i}^{\tau}\left({o}_i \right) + V_{share}(\boldsymbol{o}) +\sum_{i=1}^{n} {w}^q_{i}(\boldsymbol{o}, \boldsymbol{a}) (Q_{i}^{\tau}\left({o}_i, a_{i}\right)-V_{i}^{\tau}\left(o_i \right))
\end{aligned}
\end{displaymath}
Taking the limit $\tau\rightarrow 1$ on both sides, and use the local condition and Eq.~(\ref{eq:q_local_lim}), we have:
\begin{displaymath}
\begin{aligned}
    &\lim _{\tau \rightarrow 1}Q_{tot}^{\tau}(\boldsymbol{o}, \boldsymbol{a}) 
    = \sum_{i=1}^{n} w^v_{i}(\boldsymbol{o}) \max _{\substack{a_i \in \mathcal{A}_i \\ \mathrm{ s.t. } \pi_{\beta,i}(a_i)>0}} Q_i^{*}(o_i, a_i) + V_{share}(\boldsymbol{o}) \\
    &+\sum_{i=1}^{n} {w}^q_{i}(\boldsymbol{o}, \boldsymbol{a}) \left(\max _{\substack{a_i \in \mathcal{A}_i \\ \mathrm{ s.t. } \pi_{\beta,i}(a_i)>0}} Q_i^{*}(o_i, a_i)-\max _{\substack{a_i \in \mathcal{A}_i \\ \mathrm{ s.t. } \pi_{\beta,i}(a_i)>0}} Q_i^{*}(o_i, a_i)\right) \\
    &= \sum_{i=1}^{n} w^v_{i}(\boldsymbol{o}) \max_{\substack{a_i \in \mathcal{A} \\ \mathrm{ s.t. } \pi_{\beta,i}(a_i)>0}} Q^*_i(o_i, a_i)  + V_{share}(\boldsymbol{o})  \\
    &= \max _{\substack{\boldsymbol{a} \in \mathcal{A}^n \\ \mathrm{ s.t. } \pi_{\beta}(\boldsymbol{a})>0}} Q_{tot}^{*}(\boldsymbol{o}, \boldsymbol{a})
\end{aligned}
\end{displaymath}
On the other hand, we have:
\begin{displaymath}
\begin{aligned}
    V_{tot}^{\tau}(\boldsymbol{o}) &= \sum_{i=1}^{n} w^v_{i}(\boldsymbol{o}) V_{i}^{\tau}\left({o}_i \right) + V_{share}(\boldsymbol{o}) \\
    &= \sum_{i=1}^{n} w^v_{i}(\boldsymbol{o}) \mathbb{E}_{a_i \sim \pi_{\beta, i}(\cdot \mid o_i)}^{\tau}\left[Q^{\tau}_i(o_i, a_i)\right]  + V_{share}(\boldsymbol{o}) \\
    &\leq \sum_{i=1}^{n} w^v_{i}(\boldsymbol{o}) \max_{\substack{a_i \in \mathcal{A} \\ \mathrm{ s.t. } \pi_{\beta,i}(a_i)>0}} Q^*_i(o_i, a_i)  + V_{share}(\boldsymbol{o}) \\
    &= \max _{\substack{\boldsymbol{a} \in \mathcal{A}^n \\ \mathrm{ s.t. } \pi_{\beta}(\boldsymbol{a})>0}} Q_{tot}^{*}(\boldsymbol{o}, \boldsymbol{a})
\end{aligned}
\end{displaymath}
Thus $V_{tot}^{\tau}(\boldsymbol{o})$ has bounded support with the supremum given above. Applying Lemma 1, we obtain the global part of Theorem 1:
\begin{equation*}
\begin{aligned}
    \lim _{\tau \rightarrow 1} V_{tot}^{\tau}(\boldsymbol{o})&=\max _{\substack{\boldsymbol{a} \in \mathcal{A}^n \\ \mathrm{ s.t. }\; \pi_{\beta}(\boldsymbol{a})>0}} Q_{tot}^{*}(\boldsymbol{o}, \boldsymbol{a})
\end{aligned}
\end{equation*}

\end{proof}

\begin{figure*} [t]
\centering
\begin{minipage}[t]{0.999\linewidth}
\centering
\includegraphics[width=1.7in]{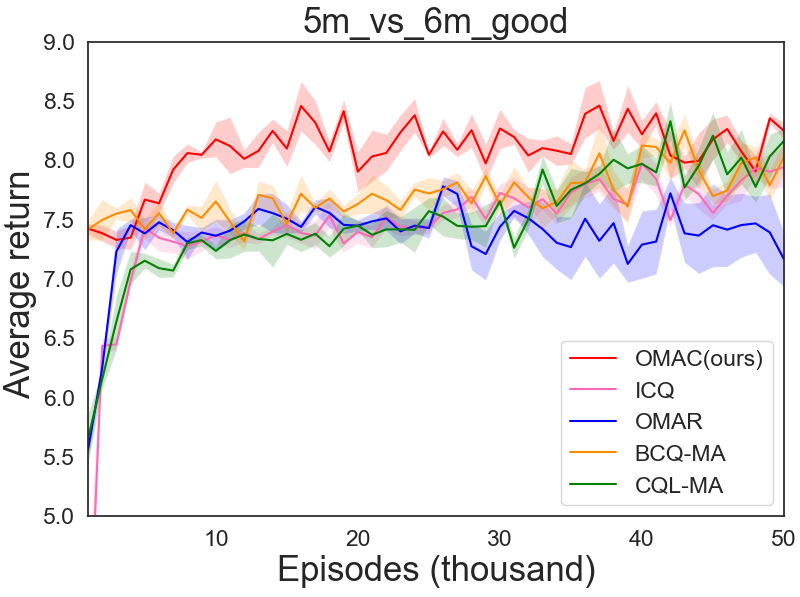}\hspace{0pt}
\includegraphics[width=1.7in]{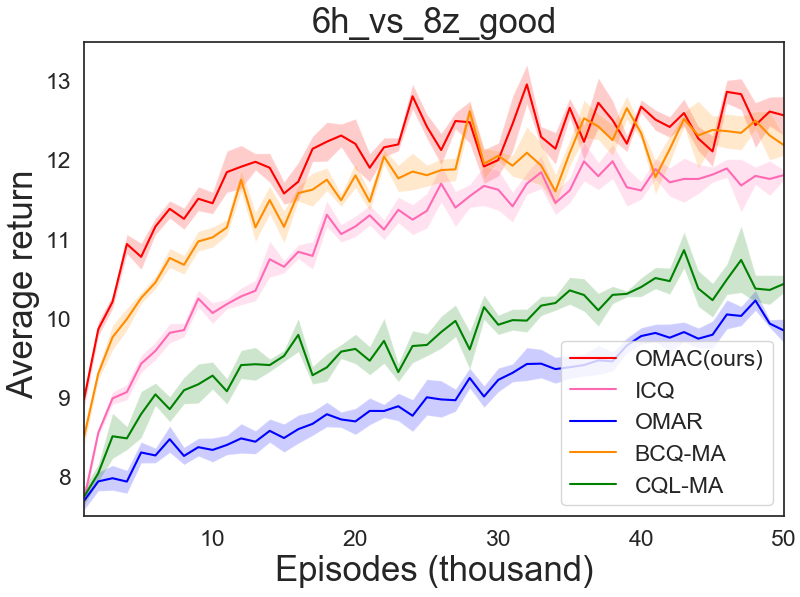}\hspace{0pt}
\includegraphics[width=1.7in]{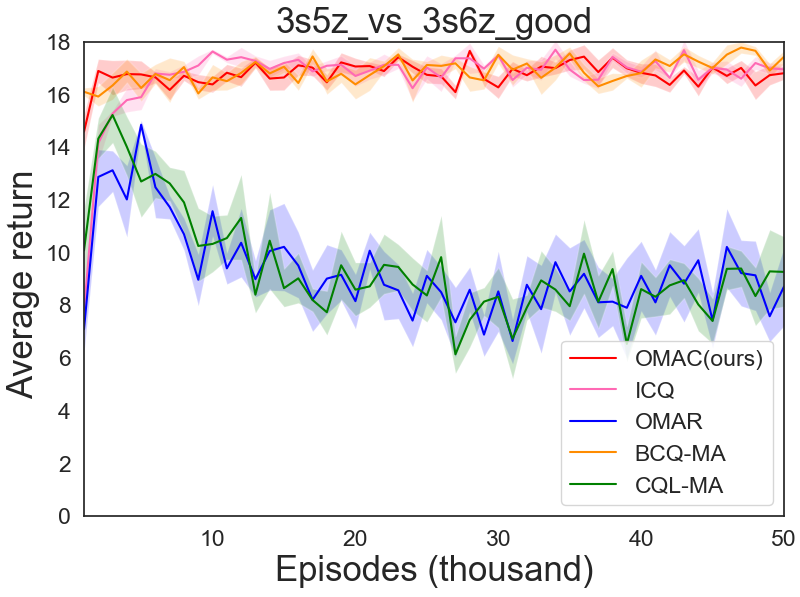}\hspace{0pt}
\includegraphics[width=1.7in]{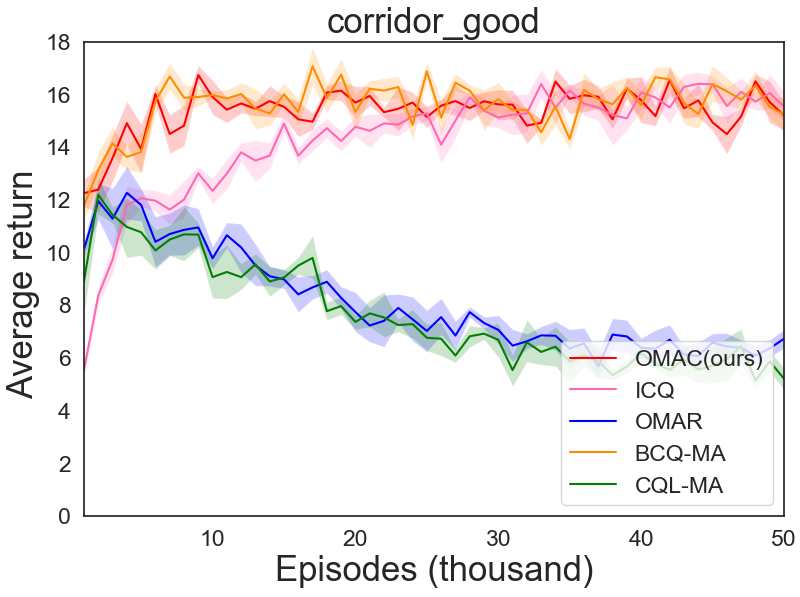}\hspace{0pt}
\includegraphics[width=1.7in]{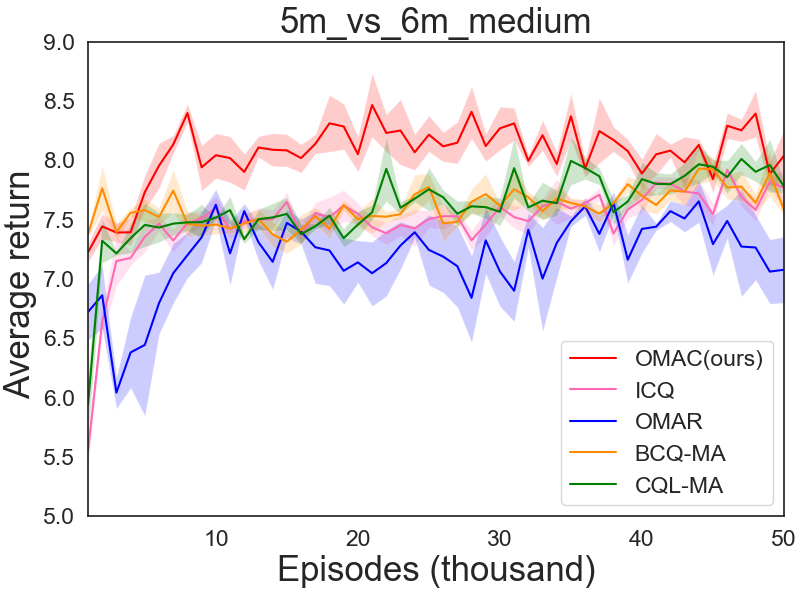}\hspace{0pt}
\includegraphics[width=1.7in]{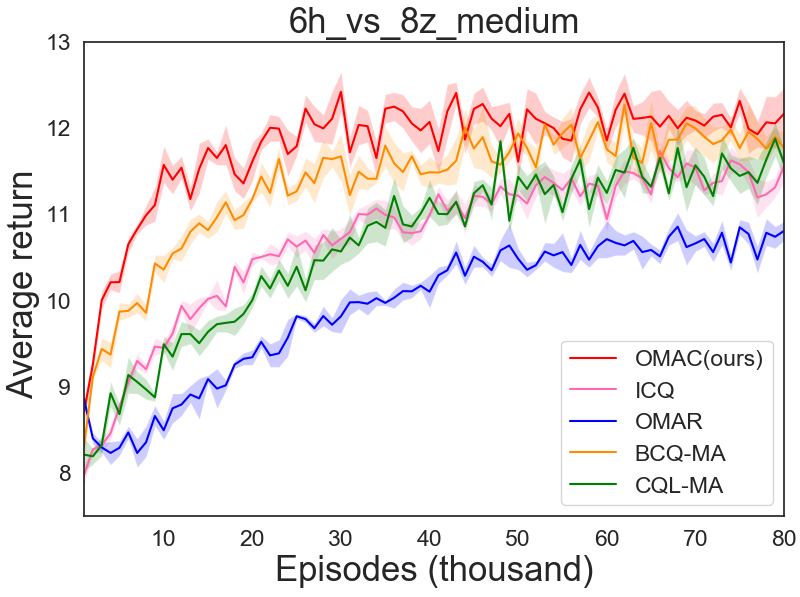}\hspace{0pt}
\includegraphics[width=1.7in]{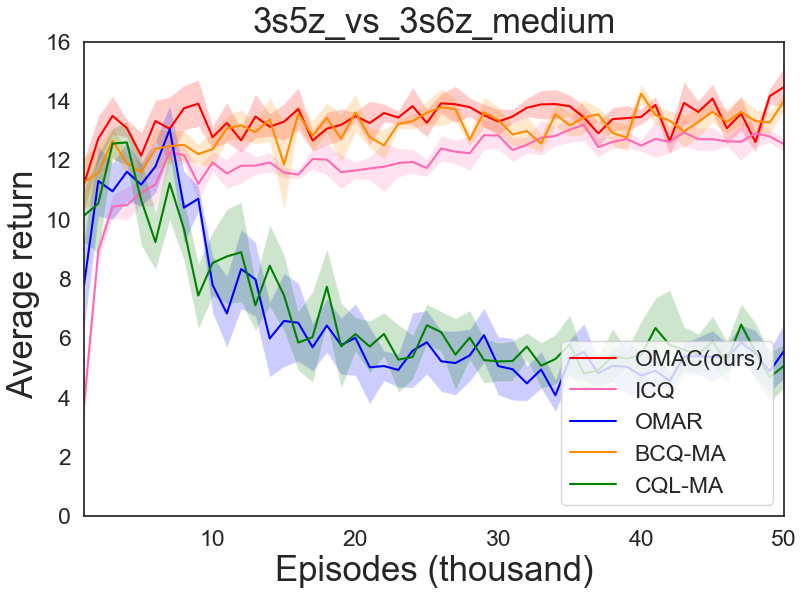}\hspace{0pt}
\includegraphics[width=1.7in]{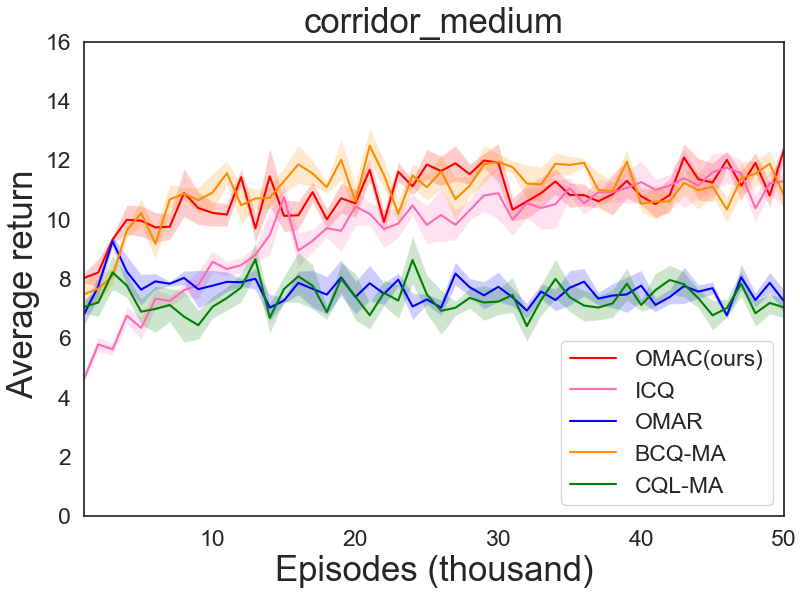}\hspace{0pt}
\includegraphics[width=1.7in]{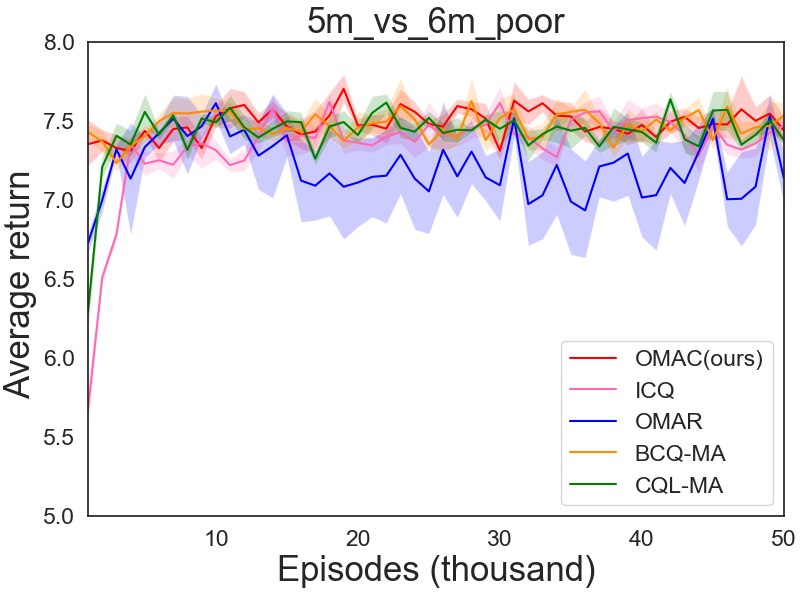}\hspace{0pt}
\includegraphics[width=1.7in]{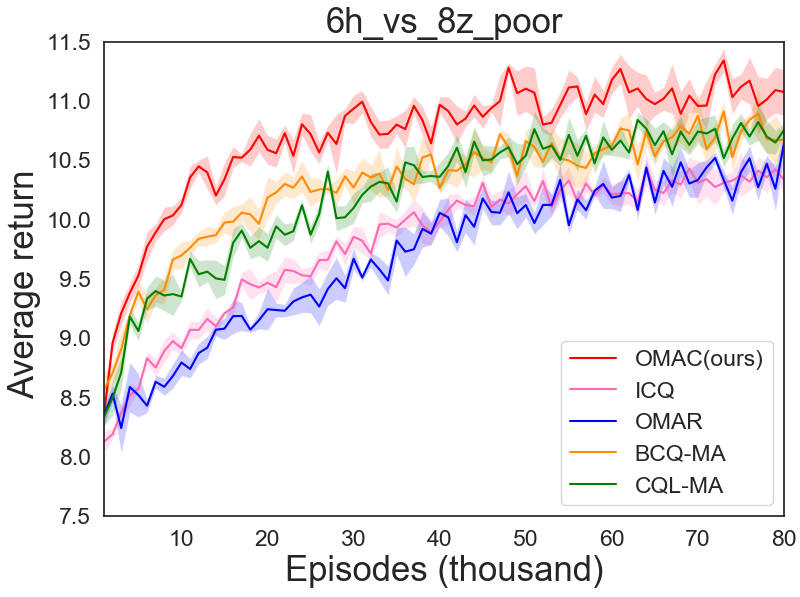}\hspace{0pt}
\includegraphics[width=1.7in]{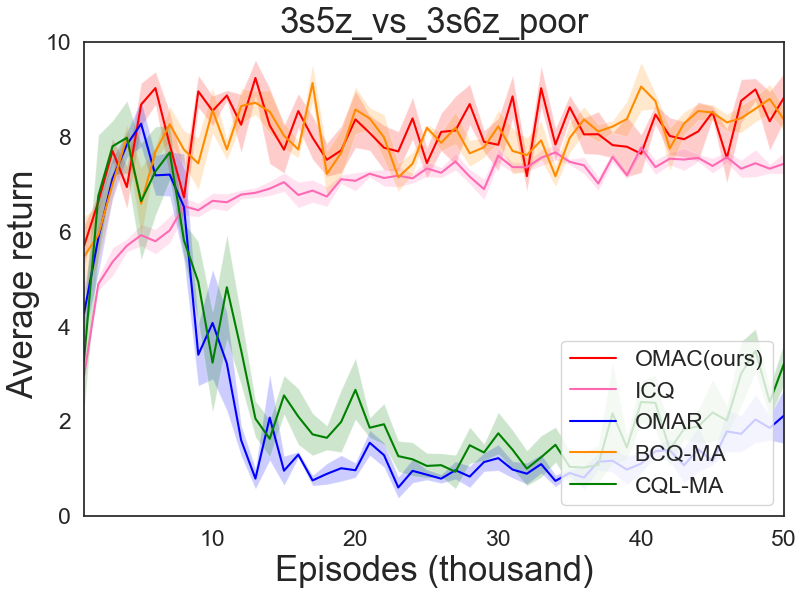}\hspace{0pt}
\includegraphics[width=1.7in]{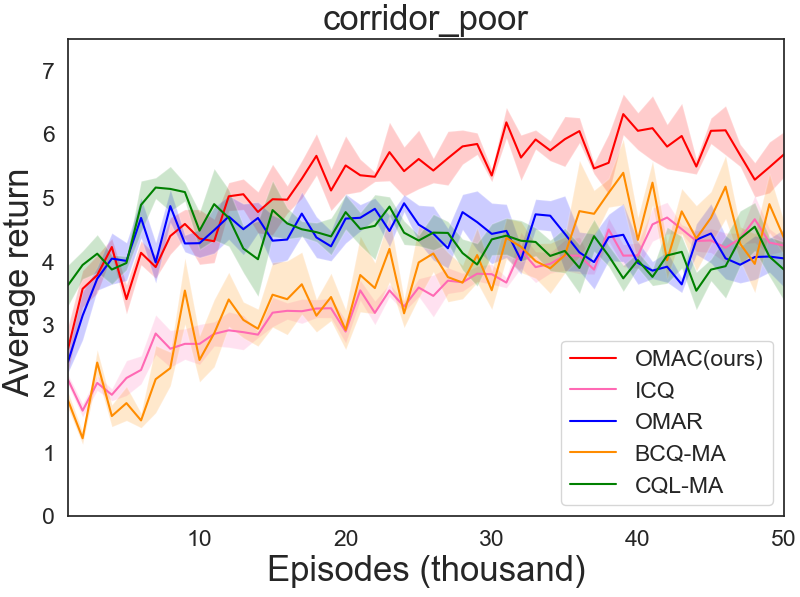}\hspace{0pt}
\end{minipage}%
\centering
\caption{Learning curves over 5 random seeds on the offline SMAC tasks.}

\label{fig:learning_curves}
\end{figure*}

\section{Experiments}
In this section, we present the experiment setups and comparative evaluation of OMAC against SOTA offline MARL baseline algorithms. We also carry out data efficiency analyses as well as ablation studies to better illustrate the effectiveness and properties of our algorithm.

\subsection{Experiment Settings}
\subsubsection{Offline datasets.}
We choose the StarCraft Multi-Agent Challenge (SMAC) benchmark \cite{smac} as our testing environment. SMAC is a popular multi-agent cooperative control environment for evaluating advanced MARL methods due to its high control complexity. It focuses on micromanagement challenges where a group of algorithm-controlled agents learns to defeat another group of enemy units controlled by built-in heuristic rules, and the goal is to maximize the average return to achieve victory.

The offline SMAC dataset used in this study is provided by~\cite{madt}, which is the largest open offline dataset on SMAC. Different from single-agent offline datasets, it considers the property of Dec-POMDP, which owns local observations and available actions for each agent. The dataset is collected from the trained MAPPO agent~\cite{mappo2}, and includes three quality levels: good, medium, and poor. SMAC consists of several StarCraft II multi-agent micromanagement maps. We consider 4 representative battle maps, including 1 hard map (5m\_vs\_6m), and 3 super hard maps (6h\_vs\_8z, 3s5z\_vs\_3s6z, corridor).

\subsubsection{Baselines.} We compare OMAC against four recent offline MARL algorithms: ICQ \cite{icq}, OMAR \cite{omar}, multi-agent version of BCQ \cite{bcq} and CQL \cite{cql}, namely BCQ-MA and CQL-MA. BCQ-MA and CQL-MA use linear weighted value decomposition structure for the multi-agent setting. Details for baseline implementations and hyperparameters in OMAC are discussed in Appendix. 

\subsection{Comparative Results}
We report the mean and standard deviation of average returns for the offline SMAC tasks during training in Fig. \ref{fig:learning_curves}. Each algorithm is evaluated using 32 independent episodes and run with 5 random seeds. The results show that OMAC consistently outperforms all baselines and achieves state-of-the-art performance in most maps. For the super hard SMAC map such as 6h\_vs\_8z or corridor, the cooperative relationship of agents is very complex and it is difficult to learn an accurate global Q-value function. Due to the couple value factorization, the global $Q_{tot}$ of OMAC has stronger expressive capability, which makes OMAC have better performance than other baseline algorithms. Moreover, both the local and
global value functions in OMAC are completely performed in an in-sample manner without the involvement of the agent policies $\pi_i$, which also leads to better offline performance.

\begin{table*}[t]
\centering
\begin{tabular}{lll|ccccc}
\hline
Map   & Dataset  &Ratio & OMAC(ours)   & ICQ        & OMAR       & BCQ-MA             & CQL-MA     \\ \hline
6h\_vs\_8z     & good  &100\%   & \textbf{12.57±0.47} & 11.81±0.12 & 9.85±0.28  & 12.19±0.23         & 10.44±0.20 \\
6h\_vs\_8z     & good  &50\%   & \textbf{12.28±0.43} & 11.59±0.43  & 9.00±0.27 & 11.93±0.52         & 9.06±0.38 \\
6h\_vs\_8z     & good  &10\%   & \textbf{10.61±0.18} & 8.86±0.21  & 7.88±0.19 & 9.92±0.10         & 8.41±0.16 \\
\hline
6h\_vs\_8z     & medium  &100\%  & \textbf{12.17±0.52} & 11.56±0.34 & 10.81±0.21 & 11.77±0.36         & 11.59±0.35 \\
6h\_vs\_8z     & medium    &50\% & \textbf{11.98±0.32} & 10.80±0.25 & 10.047±0.11 & 11.51±0.33         & 10.68±0.23 \\
% 6h\_vs\_8z     & medium  &30\%  & \textbf{11.62±0.32} & 10.55±0.18  &9.54±0.32 & 11.20 ±0.23         & 9.70±0.67 \\
6h\_vs\_8z     & medium  &10\%  & \textbf{10.86±0.08} & 9.47±0.27  &8.27±0.07 & 9.92 ±0.17        & 8.61±0.29 \\
\hline
6h\_vs\_8z     & poor  &100\%  & \textbf{11.08±0.36} & 10.34±0.23 & 10.64±0.20 & 10.67±0.19         & 10.76±0.11  \\
6h\_vs\_8z     & poor    &50\% & \textbf{10.84±0.14} & 9.97±0.14  & 9.87±0.32  & 10.39±0.46         & 9.99±0.44 \\
6h\_vs\_8z     & poor  &10\%  &\textbf{8.59±0.21} & 7.51±0.22  &7.29±0.09 & 8.34±0.19    & 8.18±0.39 \\ 
\hline
\end{tabular}
    \caption{Evaluation on data efficiency of different methods on offline SMAC datasets with reduced size}
    \label{tab:sample_efficiency}
\end{table*}

\begin{figure*}[t]
	\centering
	\subfloat[Ablation on value decomposition]{\includegraphics[width=2.7in]{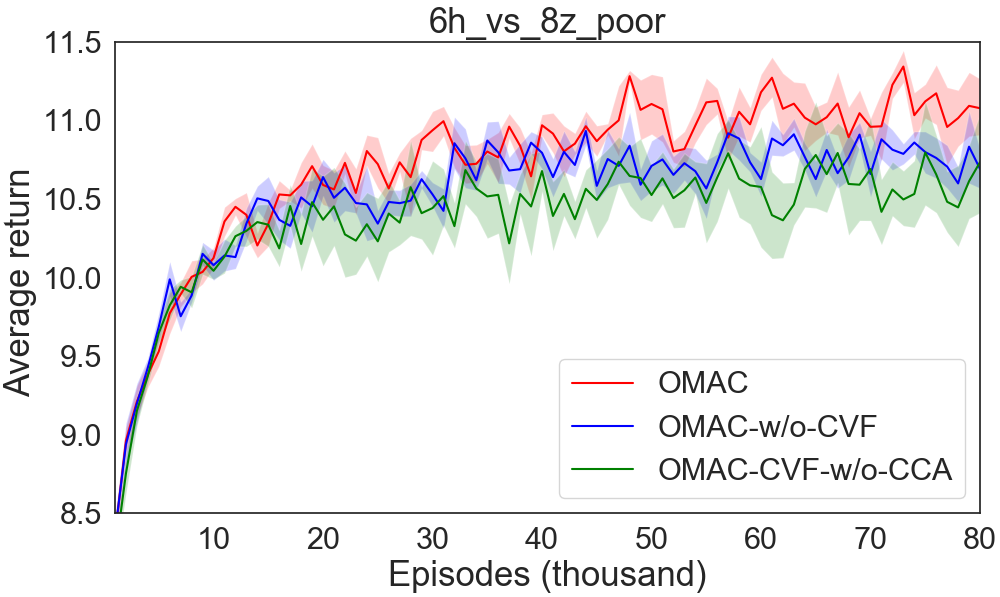}}\quad \quad
\subfloat[Comparison on implicit max-Q operation]{\includegraphics[width=2.7in]{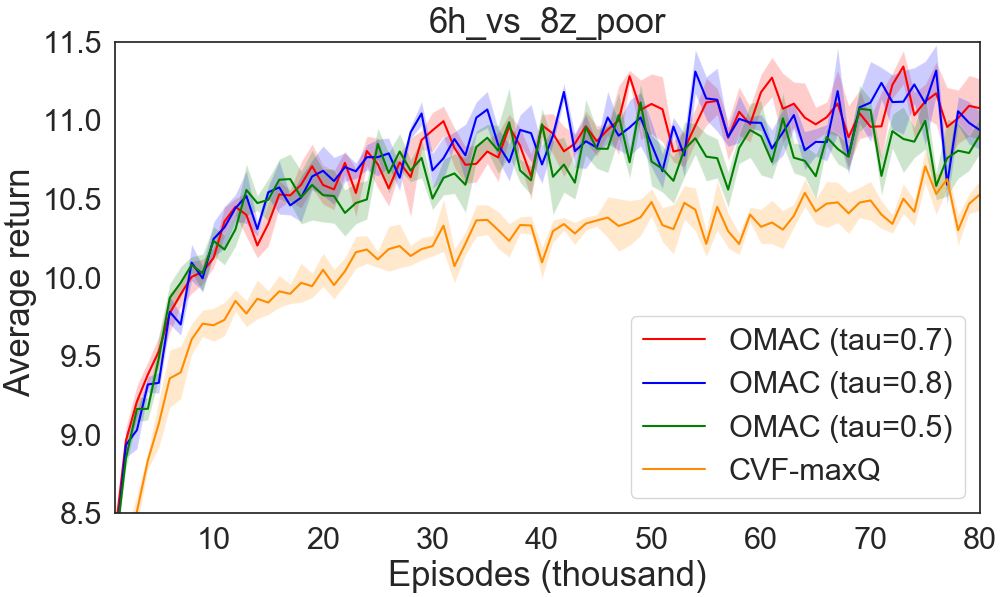}}
	\caption{Analyses and ablations on the design components of OMAC
	}
	\label{fig:ablation}
\end{figure*}

\subsection{Evaluation on Data Efficiency}
Data efficiency is particularly important for offline RL applications, as real-world datasets can often be limited. For offline MARL problems, this can be even more challenging due to high-dimensional joint state and action spaces, which potentially requires a larger amount of data to guarantee reasonable model performance.
To demonstrate the sample utilization efficiency of OMAC over baseline algorithms, we further conduct experiments on SMAC map 6h\_vs\_8z with the size of the original datasets reduced to 50\% and 10\%.
As shown in Table \ref{tab:sample_efficiency}, OMAC consistently outperforms the baseline algorithms in all tasks. Moreover, it is observed that OMAC experiences a lower level of performance drop when the dataset size is reduced, whereas recent offline MARL counterpart algorithms like ICQ and OMAR suffer from noticeable performance drop.
% performs better than baseline algorithms in all settings. Besides, when the dataset is reduced, OMAC drops less than other algorithms.
The reasons for the better data efficiency of OMAC could be due to the use of both coupled credit assignment and in-sample learning. As in OMAC, training on the credit assignment weights $w^q$ also updates the parameters of $w^v$, which enables effective re-use of data. Meanwhile, the local state-value function $V_i$ is learned by expectile regression in a supervised manner rather than performing dynamic programming, which in principle can be more stable and sample efficient.

\subsection {Analyses on the Design Components of OMAC}

In this section, we conduct ablation studies and additional analyses to examine the effectiveness of different design components of OMAC.

\subsubsection{Ablation on coupled value factorization. }
To examine the impact of our coupled value factorization (CVF) strategy, we conduct the ablation study on map 6h\_vs\_8z with poor dataset. We test OMAC and the variant without using the coupled value factorization (OMAC-w/o-CVF), which uses the linear weighted decomposition structure used by ICQ and OMAR. As shown in Fig. \ref{fig:ablation}(a), OMAC performs better than OMAC-w/o-CVF, which clearly suggests the advantage of coupled value factorization strategy.

\begin{figure}[t]
\centering
\includegraphics[width=2.5in]{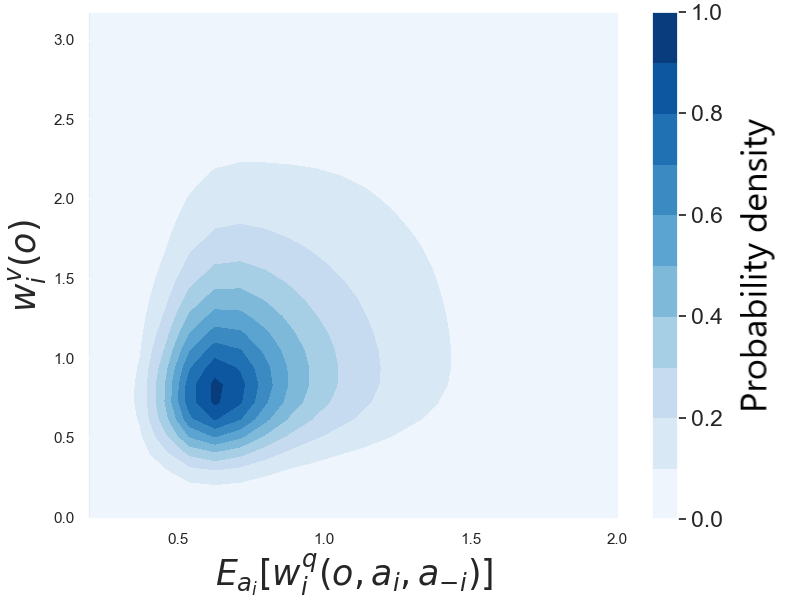}
\caption{Probability density plot of $w^v_i(\boldsymbol{o})$ and $\mathbb{E}_{{a}_i} [w_i^q(\boldsymbol{o}, a_i,  \boldsymbol{a}_{-i})]$ using credit assignment weight functions $w^v$ and $w^q$ learned in the 6z\_vs\_8z\_poor task.}
\label{fig:couple}
\end{figure}

\subsubsection{Analyses on coupled credit assignment}
An important design in our method is the coupled credit assignment (CCA) scheme in CVF that learns $w^v$ and $w^q$ dependently. 
We compare OMAC and the variant without coupled credit assignment (OMAC-CVF-w/o-CCA), which is trained by implementing $w^v$ and $w^q$ as two independent networks without the coupled structure. Fig. \ref{fig:ablation}(a) also shows that OMAC has better performance and stability than OMAC-CVF-w/o-CCA, suggesting the important contribution of CCA on model performance due to more stable and better regularized credit assignment between the state-value function and Q-value function on the same observations.

To verify that the coupled network structure of $w^v(\boldsymbol{o})$ and $w^q(\boldsymbol{o},\boldsymbol{a})$ in OMAC indeed produces correlated credit assignments on state-value and Q-value functions on the same observations, we further conduct an experiment to inspect their produced values.
We use the learned models of $w^v(\boldsymbol{o})$ and $w^q(\boldsymbol{o},\boldsymbol{a})$ to evaluate $\mathbb{E}_{{a}_i} [w_i^q(\boldsymbol{o}, a_i,  \boldsymbol{a}_{-i})]$ and its relationship with $w^v_i(\boldsymbol{o})$ for an arbitrary agent $i$. Based on the results plotted in Fig. \ref{fig:couple}, we observe that a positively correlated relationship exists between $w^v(\boldsymbol{o})$ and $w^q(\boldsymbol{o},\boldsymbol{a})$.

\subsubsection{Impact of implicit max-Q operation on local value functions. }
In our method, OMAC implicitly performs the max-Q operation at the local level by learning the local state-value function $V_i(o_i)$ as the upper expectile of ${Q}_i(o_i, a_i)$ based on $(o_i, a_i)$ samples entirely from dataset $\mathcal{D}$. Choosing a appropriate $\tau$ will make $V_i(o_i)$ to approximate the maximum of the local Q-function ${Q}_i(o_i,a_i)$ over actions $a_i$ constrained to the dataset actions.

Fig. \ref{fig:ablation} (b) shows the performance of OMAC with different levels of $\tau$ on the offline 6h\_vs\_8z\_poor dataset. The performances of OMAC ($\tau=0.7$) and OMAC ($\tau=0.8$) are close and are much better than OMAC ($\tau=0.5$), showing some degree of hyperparameter robustness when $\tau$ is reasonably large. With $\tau=0.5$, the local state-value function $V_i$ is essentially learned to be the expected value of $Q_i$, which is more conservative and leads to suboptimal performance.
To illustrate the benefit of the implicit max-Q evaluation in OMAC, we also implement another algorithm CVF-maxQ for comparison. In CVF-maxQ, the coupled value factorization structure is partially preserved while all the local state-value $V_i(o_i)$ are replaced by $\max_{a_i} Q_{i}(o_i, {a_i})$. As Fig. \ref{fig:ablation} (b) shows, OMAC performs much better than CVF-maxQ, which clearly suggests the advantage of performing local max-Q operation in an implicit and in-sample manner under offline learning.

\section {Conclusion}
In this paper, we propose a new offline MARL algorithm named OMAC. OMAC adopts a coupled value factorization structure, which organically marries offline RL with a specially designed coupled multi-agent value decomposition strategy and has stronger expressive capability. Moreover, OMAC performs in-sample learning on the decomposed local state-value functions, which implicitly conducts the max-Q operation at the local level while avoiding distributional shift caused by evaluating on the out-of-distribution actions. We benchmark our method using offline datasets of SMAC tasks and the results show that OMAC achieves superior performance and better data efficiency over the state-of-the-art offline MARL methods.

\bibliographystyle{ACM-Reference-Format} 
\bibliography{aamas23.bib}

\clearpage

\appendix

\section{Implementation Details}\label{app:imp}

\subsection{Details of OMAC}
In this paper, all experiments are implemented with pytorch and executed on NVIDIA V100 GPUs. The local value networks and policy networks are represented by 3-layers ReLU activated MLPs with 256 units for each hidden layer. For the CCA weight network, we use 3-layer ReLU activated MLPs with 64 units for each hidden layer. All the networks are optimized by Adam optimizer.

\subsection{Details of baselines}
We compare OMAC against four recent offline MARL algorithms: ICQ \cite{icq}, OMAR \cite{omar}, BCQ-MA and CQL-MA. For the ICQ and OMAR, we implement them based on the algorithm description in their papers. BCQ-MA is the multi-agent version of BCQ, and CQL-MA is the multi-agent version of CQL. BCQ-MA and CQL-MA use linear weighted value decomposition structure as $Q_{tot}=\sum_{i=1}^{n} {w_i}(\boldsymbol{o}) Q_{i}\left({o}_i, a_{i}\right) + b(\boldsymbol{o})$ for the multi-agent setting. The policy constrain of BCQ-MA and the value regularization of CQL-MA are both imposed on the local Q-value.

\subsection{Hyperparameters}
The hyperparameters of OMAC and baselines are listed in Table \ref{hyper-param}. The two important hyperparameters of OMAC are expectile parameter $\tau$ and temperature parameter $\beta$. 
The expectile parameter $\tau$ makes $V_i(o_i)$ approximate the maximum of the target local Q-function $\bar{Q}_i(o_i,a_i)$ over actions $a_i$ constrained to the dataset actions. Considering the stability and sample efficiency, we use $\tau=0.7$ to make $V_i$ to approximate the maximum of the local $Q_i$. 

The temperature parameter $\beta$ is used for the policy learning. For lower $\beta$ values, the algorithm is more conservative. While for higher values, it attempts to approximate the maximum of the local Q-function. We use $\beta=0.5$ on the good and medium datasets of 3s5z\_vs\_3s6z and corridor map. On these maps, the quality of the data set is relatively high, so we choose a lower $\beta$. On other datasets, we use $\beta=1.0$ to approximate the maximum of the local Q-function.

\begin{table}[htbp]
\centering
\begin{tabular}{ll}
\hline
Hyperparameter                 & Value \\
\hline
\multicolumn{2}{c}{\textbf{Shared parameters}} \\
% \hline
Value network learning rate     & 5e-4  \\
Policy network learning rate    & 5e-4  \\
Optimizer                       & Adam  \\
Target update rate              & 0.005 \\
Batch size                      & 128   \\
Discount factor                 & 0.99  \\
Hidden dimension                & 256   \\
\hline
\multicolumn{2}{c}{\textbf{OMAC}}         \\
% \hline
CCA network hidden dimension & 64    \\
Expectile parameter $\tau$      & 0.7   \\
Temperature parameter $\beta$      & 1 or 0.5   \\
\hline
\multicolumn{2}{c}{\textbf{Others}}             \\
% \hline
Weight network hidden dimension & 64    \\
Threshold  (BCQ-MA)                     & 0.3 \\
$\alpha$   (OMAR, CQL-MA)                   & 1.0   \\

\hline
\end{tabular}
  \caption{Hyper-parameter of OMAC and baselines}
  \label{hyper-param}
\end{table}

\section{Experiment Settings}\label{app:exp}
We choose the StarCraft Multi-Agent Challenge (SMAC) benchmark as our testing environment. SMAC is a popular multi-agent cooperative control environment for evaluating advanced MARL methods due to its high control complexity. SMAC consists of a set of StarCraft II micro scenarios, and all scenarios are confrontations between two groups of units. The units of the first group are controlled by agents based on MARL algorithm, while the units of another group are controlled by a built-in heuristic game AI bot with different difficulties. The initial location, quantity and type of units, and the elevated or impassable terrain vary from scenario to scenario. The available actions for each agent include no operation, move [direction], attack [enemy id], and stop. The reward that each agent receives is the same. Agents receive a joint reward calculated from the hit-point damage dealt and received.

The offline SMAC dataset used in this study is provided by~\citet{madt}, which is the largest open offline dataset on SMAC. Different from single-agent offline datasets, it considers the property of Dec-POMDP, which owns local observations and available actions for each agent. The dataset is collected from the trained MAPPO agent, and includes three quality levels: good, medium, and poor. SMAC consists of several StarCraft II multi-agent micromanagement maps. We consider 4 representative battle maps, including 1 hard map (5m\_vs\_6m), and 3 super hard maps (6h\_vs\_8z, 3s5z\_vs\_3s6z, corridor), as our experiment tasks. The task types of the maps are listed in the Table \ref{maps}. For each original large dataset, we randomly sample 1000 episodes as our dataset.

\begin{table}[htbp]
	\centering
	\setlength{\tabcolsep}{4mm}
    \begin{tabular}{cc}
\hline
Map Name     & Type              
\\  \hline  
5m\_vs\_6m   &  homogeneous \& asymmetric \\
6h\_vs\_8z   & micro-trick: focus fire    \\
3s5z\_vs\_3s6z & heterogeneous \& asymmetric      \\ 
corridor    & micro-trick: wall off     \\\hline
\end{tabular}
  \caption{SMAC maps for experiments.}
  \label{maps}
\end{table}

\section{Additional Results}\label{app:exp}
\subsection{Results of offline SMAC tasks}
We report the mean and standard deviation of average returns for the offline SMAC tasks in Table \ref{tab:bench}. The datasets on each map include three quality levels: good, medium, and poor. We report the mean and standard deviation of average returns for the offline SMAC tasks during training. Each algorithm is evaluated using 32 independent episodes and run with 5 random seeds.

\begin{table*}[t]
\centering
\begin{tabular}{ll|ccccc}
\hline
Map            & Dataset & OMAC(ours)          & ICQ        & OMAR       & BCQ-MA             & CQL-MA     \\ \hline
5m\_vs\_6m     & good    & \textbf{8.25±0.12}  & 7.94±0.32  & 7.17±0.42  & 8.03±0.31          & 8.17±0.20  \\
5m\_vs\_6m     & medium  & \textbf{8.04±0.42}  & 7.77±0.30  & 7.08±0.51  & 7.58±0.10          & 7.78±0.10  \\
5m\_vs\_6m     & poor    & 7.44±0.16           & 7.47±0.13  & 7.13±0.30   & \textbf{7.53±0.15} & 7.38±0.06  \\ \hline
6h\_vs\_8z     & good    & \textbf{12.57±0.47} & 11.81±0.12 & 9.85±0.28  & 12.19±0.23         & 10.44±0.20 \\
6h\_vs\_8z     & medium  & \textbf{12.17±0.52} & 11.56±0.34 & 10.81±0.21 & 11.77±0.36         & 11.59±0.35 \\
6h\_vs\_8z     & poor    & \textbf{11.08±0.36} & 10.34±0.23 & 10.64±0.20 & 10.67±0.19         & 10.76±0.11 \\ \hline
3s5z\_vs\_3s6z & good    & 16.81±0.46          & 16.95±0.39 & 8.71±2.84  & \textbf{17.43±0.46}         & 9.27±2.53  \\
3s5z\_vs\_3s6z & medium  & \textbf{14.47±1.11} & 12.55±0.53 & 5.58±1.77  & 13.99±0.62         & 5.08±1.45  \\
3s5z\_vs\_3s6z & poor    & \textbf{8.82±0.95}  & 7.43±0.42  & 2.12±1.07  & 8.36±0.45          & 3.22±0.87  \\ \hline
corridor       & good    & 15.21±1.06          & \textbf{15.55±1.13} & 6.74±0.69  & 15.24±1.21         & 5.22±0.81  \\
corridor       & medium  & \textbf{12.37±0.51} & 11.30±1.57 & 7.26±0.71  & 10.82±0.92         & 7.04±0.66  \\
corridor       & poor    & \textbf{5.68±0.65}  & 4.25±0.17  & 4.05±0.86  & 4.37±0.57          & 3.89±0.89 \\
\hline
\end{tabular}
    \caption{Average scores and standard deviations over 5 random seeds on the offline SMAC tasks}
    \label{tab:bench}
\end{table*}

\begin{figure*} [t]
\centering
\begin{minipage}[t]{0.999\linewidth}
\centering
\includegraphics[width=2.3in]{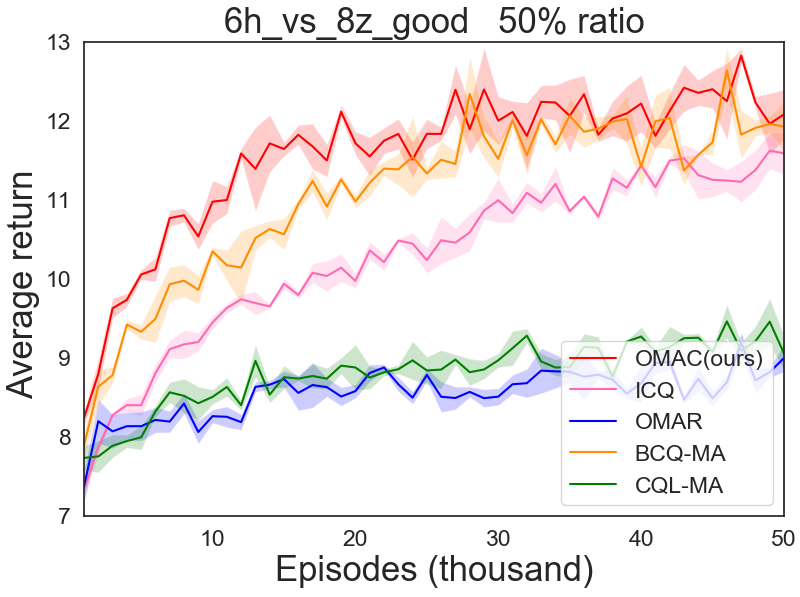}\hspace{0pt}
\includegraphics[width=2.3in]{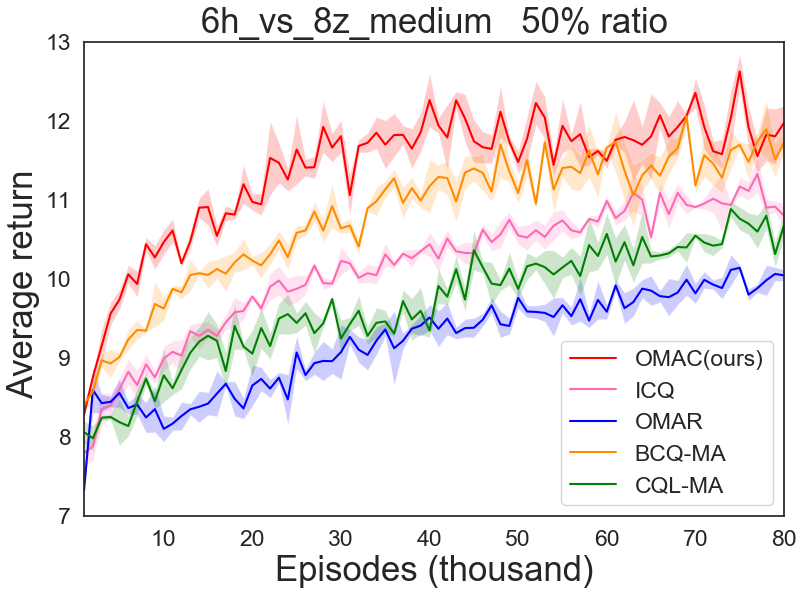}\hspace{0pt}
\includegraphics[width=2.3in]{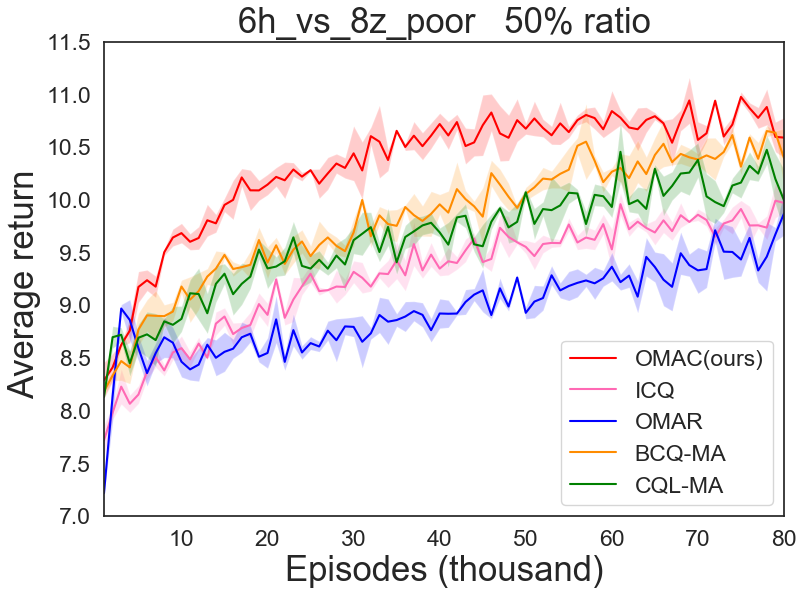}\hspace{0pt}
\includegraphics[width=2.3in]{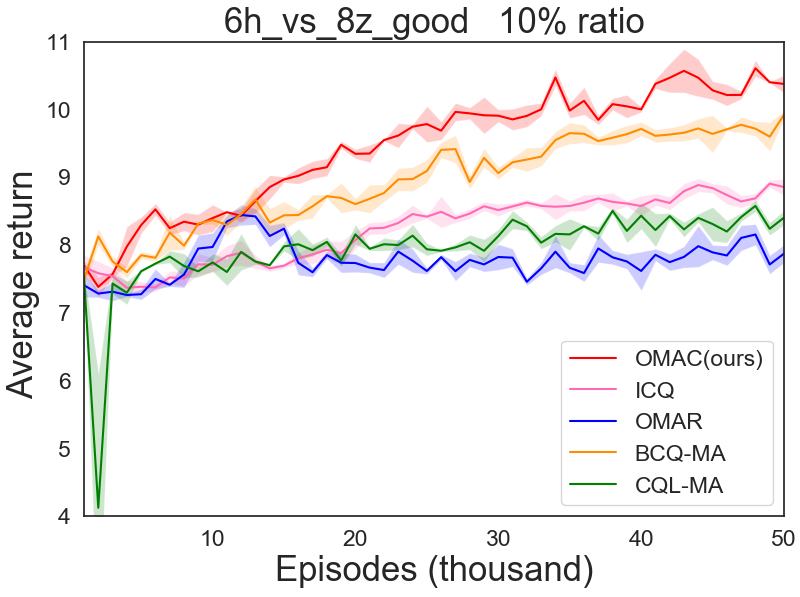}\hspace{0pt}
\includegraphics[width=2.3in]{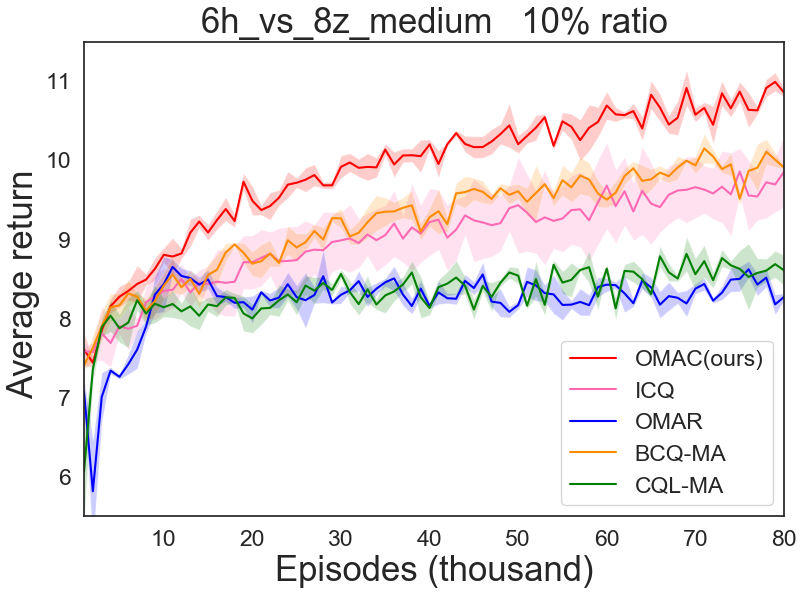}\hspace{0pt}
\includegraphics[width=2.3in]{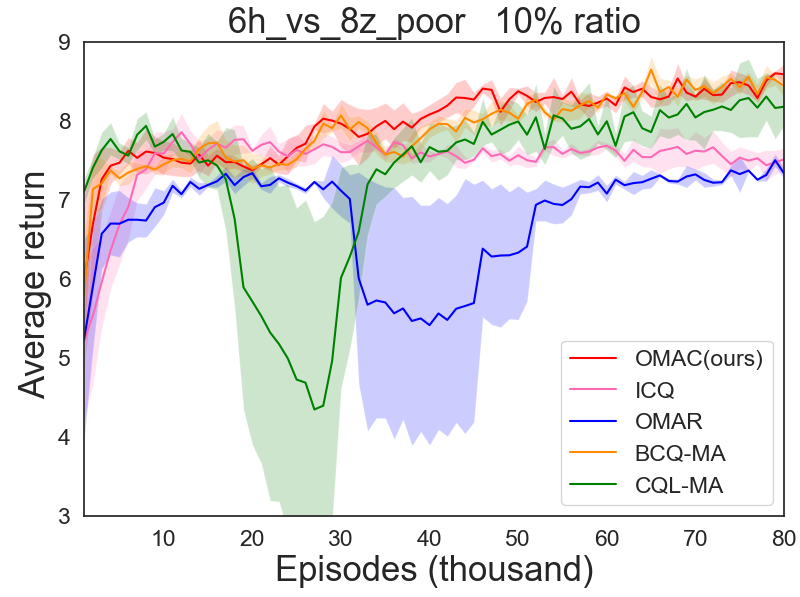}\hspace{0pt}
\end{minipage}%
\centering
\caption{Learning curves of data efficiency experiments on the offline 6h\_vs\_8z tasks.}

\label{fig:learning_curves_efficiency}
\end{figure*}

\subsection{Learning Curves of data efficiency experiments on offline SMAC tasks}
We further conduct experiments on SMAC map 6h\_vs\_8z with the size of the original datasets reduced to 50\% and 10\%. The learning curves of OMAC and baselines about data efficiency experiments on offline SMAC tasks are shown in Fig. \ref{fig:learning_curves_efficiency}.

\end{document}